\documentclass[journal]{IEEEtran}

\usepackage{times}
\usepackage{epsfig}
\usepackage{graphicx}
\usepackage{subfigure}
\usepackage{caption}
\usepackage{amsmath}
\usepackage{amssymb}
\usepackage{amsthm}
\usepackage{cite}
\usepackage[noend]{algpseudocode}
\usepackage{array, bm, placeins, lipsum}
\usepackage{booktabs, color, enumerate, microtype, multirow}
\usepackage[x11names,dvipsnames,table]{xcolor} % use for color
\usepackage{arydshln}
\usepackage{verbatim}
\usepackage{pifont}
\usepackage{engord}
\newcolumntype{x}[1]{>{\centering\arraybackslash\hspace{0pt}}p{#1}}

\usepackage{bbding}
\usepackage{wasysym}

\usepackage{array}
\newcommand{\PreserveBackslash}[1]{\let\temp=\\#1\let\\=\temp}
\newcolumntype{C}[1]{>{\PreserveBackslash\centering}p{#1}}
\newcolumntype{R}[1]{>{\PreserveBackslash\raggedleft}p{#1}}
\newcolumntype{L}[1]{>{\PreserveBackslash\raggedright}p{#1}}

\makeatletter
\newif\if@restonecol
\makeatother

\usepackage[linesnumbered,ruled,vlined]{algorithm2e}%[ruled,vlined]{
\usepackage{algpseudocode}
  % Use Input in the format of Algorithm
 % Use Output in the format of Algorithm

\newcommand{\beq}{\vspace{0mm}\begin{equation}}
\newcommand{\eeq}{\vspace{0mm}\end{equation}}
\newcommand{\beqs}{\vspace{0mm}\begin{eqnarray}}
\newcommand{\eeqs}{\vspace{0mm}\end{eqnarray}}
\newcommand{\barr}{\begin{array}}
\newcommand{\earr}{\end{array}}

\newcommand{\xv}{\mathbf{x}}

\newcommand{\zv}{\mathbf{z}}

\theoremstyle{remark}
\newtheorem{remark}{Remark}
\newtheorem{proposition}{Proposition}
\newtheorem{corollary}{Corollary}[proposition]

\begin{document}
\title{Deep Unsupervised Image Anomaly Detection:\\ An Information Theoretic Framework}

\author{Fei~Ye,
        Huangjie~Zheng,
        Chaoqin~Huang,
        Ya Zhang
\IEEEcompsocitemizethanks{\IEEEcompsocthanksitem F. Ye, C. Huang and Y. Zhang are with the Cooperative Medianet Innovation Center and the Shanghai Key Laboratory of Multimedia Processing and Transmissions, Shanghai
Jiao Tong University, Shanghai 200240, China.
(E-mail: \{yf3310, huangchaoqin, ya\_zhang\}@sjtu.edu.cn).
H. Zheng is with Department of statistics and data science, University of Texas at Austin, Austin, 78751 TX, USA. (E-mail:  huangjie.zheng@utexas.edu).}
}

% The paper headers
\markboth{Journal of \LaTeX\ Class Files, November~2020}%
{Shell \MakeLowercase{\textit{et al.}}: Bare Demo of IEEEtran.cls for IEEE Journals}

% make the title area
\maketitle

\begin{abstract}
Surrogate task based methods have recently shown great promise for unsupervised image anomaly detection. However, there is no guarantee that the surrogate tasks share the consistent optimization direction with anomaly detection. In this paper, we return to a direct objective function for anomaly detection with information theory, which maximizes the distance between normal and anomalous data in terms of the joint distribution of images and their representation. Unfortunately, this objective function is not directly optimizable under the unsupervised setting where no anomalous data is provided during training. Through mathematical analysis of the above objective function, we manage to decompose it into four components. In order to optimize in an unsupervised fashion, we show that, under the assumption that distribution of the normal and anomalous data are separable in the latent space, its lower bound can be considered as a function which weights the trade-off between mutual information and entropy. This objective function is able to explain why the surrogate task based methods are effective for anomaly detection and further point out the potential direction of improvement. Based on this object function we introduce a novel information theoretic framework for unsupervised image anomaly detection. Extensive experiments have demonstrated that the proposed framework significantly outperforms several state-of-the-arts on multiple benchmark data sets. Source code will be made publicly available.
\end{abstract}

\begin{IEEEkeywords}
Anomaly detection, information theoretic framework, mutual information, entropy
\end{IEEEkeywords}

\IEEEpeerreviewmaketitle

\section{Introduction}

\IEEEPARstart{A}{nomaly} detection, aiming to find patterns that do not conform to expected behavior, has been broadly applicable in medical diagnosis, credit card fraud detection, sensor network fault detection and numerous other fields~\cite{chandola2009anomaly}.
With the recent advance of the convolution neural network, anomaly detection for image data has received significant attention, which is challenging due to high-dimension and complex texture of images.
As anomalous data is diverse and inexhaustible, anomaly detection is usually considered under the setting where the training data contains only the ``normal'' class, \emph{i.e.}, the anomalous data is not available during training. 

In this paper, we explore unsupervised anomaly detection for images. Anomaly detection aims to learn effective feature representation that leads to the best separation of anomalous data from normal data. However, under the unsupervised setting, only the normal data is available during the model fitting. 

It seems straightforward to make  assumptions on the distribution of anomalous data and model unsupervised anomaly detection as a one-class classification problem. Several methods have explored in this direction. 
Scholkopf et al.~\cite{OC-SVM} and Oza et al.~\cite{oza2018one} assumed the anomalous distribution to be the origin and the zero centered Gaussian distribution accordingly. Ruff et al.~\cite{SVDD} assumed that the anomalous distribution to be a uniform distribution in latent space.
However, the simple assumption of the anomalous distribution can not force the network to extract effective features. Making it inevitable to rely on the pre-trained model, resulting in a two-step optimizing process.

Other than directly learning to separate anomalous and normal data, surrogate based approaches attempt to first learn feature representation with an unsupervised alternative objective function, and then assume that model will lead to poor performance to the anomalous data, which are not exposed to the training, so that they can be distinguished from the normal data. Two types of surrogate tasks are typically adopted: reconstruction~\cite{Sabokrou2018Adversarially,deecke2018anomaly,Akcay2018,OCGAN} and self-labeling~\cite{golan2018deep,NipsEffective}. For self-labeling based method, 
Golan et al.~\cite{golan2018deep} applied dozens of image geometric transforms and created a self-labeled dataset for transformation classification. Wang et al.~\cite{NipsEffective} introduced more self-label method like patch re-arranging and irregular affine transformations.
Recently, a new surrogate task: restoration~\cite{fye2020ARNet} is introduced which assumes that through restoring the erased information, the model is effectively forced to learn what is erased, and the feature embedding can thus be controlled by the corresponding information erasing. 
While surrogate approaches have become the mainstream method for anomaly detection in recent years and have shown promising results, it is hard to ensure that the surrogate tasks share the consistent optimization direction with anomaly detection. 

In this paper, we return to a direct objective function for anomaly detection, which maximizes the distance between normal and anomalous data in terms of the joint distribution for image and feature representation.
We decompose the above objective function into the following four components through mathematical analysis: \emph{Mutual information} between image space and latent space of normal data, \emph{Entropy} of normal data in latent space; \emph{Expectation of cross-entropy} between normal and anomalous samples in latent space; \emph{Distribution distance} between normal and anomalous samples in image space.
The first two components can be calculated with normal data only. The fourth component is a constant once normal data is selected.
Only optimizing the third term requires anomalous data. To optimize the objective function under the unsupervised setting, we investigate the condition to bypass the third term and get a lower bound on the objective function which can be considered as a trade-off between the mutual information and the entropy. To our best knowledge, this is the first end-to-end framework to optimize anomaly detection directly. We  provide a specific method based on the framework and further show that the lower-bound objective function can be linked to several previous studies such as the reconstruction-based method and the classic one-class classification method SVDD~\cite{SVDD}. 
As most approaches focus on only one term (mutual information or entropy), the proposed objective function can not only fill the lack of theory for many of the existing anomaly detection method but also point out the potential direction of improvement. To our best knowledge, this is the first anomaly detection objective function that can be end-to-end optimized under the unsupervised setting.

To validate the effectiveness of our lower bound objective function, we conduct extensive experiments on several benchmark datasets, including MNIST~\cite{lecun1998mnist}, Fashion-MNIST~\cite{xiao2017fashion}, CIFAR-10~\cite{krizhevsky2009learning}, CIFAR-100~\cite{krizhevsky2009learning} and ImageNet~\cite{russakovsky2015imagenet}.
Our experimental results have shown that the proposed method outperforms several state-of-the-art methods in terms of model accuracy and model stability to a large extent. The main contributions of the paper are summarized as follows:

\section{Related Works}
\subsection{Anomaly Detection}

For anomaly detection on images and videos, a large variety of methods have been developed in recent years~\cite{Chalapathy2019Deep, Markou2003Novelty_V1, Markou2003Novelty_V2, chandola2009anomaly, Pimentel2014A, Kiran2018An, chu2018sparse, xu2018anomaly, xu2019video}.  
As anomalous data is diverse and inexhaustible, anomaly detection is usually considered under the setting where the training dataset contains only ``normal'' data, i.e. the anomalous data is not available during training. The main challenges are two-fold, the first one is how to extract effective features, the second one is how to separate anomalous data in latent space.

All the previous approaches can be broadly classified into three categories, statistic based, surrogate based and one-class classification based approaches.

\textbf{Statistic based approaches}~\cite{Eskin2000Anomaly, Yamanishi2000On, Rahmani2017Coherence} assume that anomalous data will be mapped to different statistical representations that are far from the distribution of normal data. The features are typically extracted by some shallow encoders like Principal Component Analysis (PCA), kernel PCA, or Robust PCA~\cite{Xu2012Robust}.

\textbf{Surrogate based approaches} assume that the anomalous data will yield in different embedding from normal data and lead to poor performance which can be utilized as criteria to define anomaly. It manages to tackle the first challenge through unsupervised learning with an alternative objective function other than optimizing anomaly detection directly.  Three main frameworks, different in the employed supervision, are typically adopted: reconstruction-based, self-labeling-based, and outlier exposure-based frameworks.

Reconstruction-based frameworks take input image as supervision and assume that anomalous data have larger reconstruction error. The advantage is that supervision can be easily obtained. The main challenge is to find a more effective loss function to replace the typically adopted pixel-wised MSE loss, which is indicated ineffective to force the model to extract discriminate features~\cite{SimilarityMetricAutoencoding,dosovitskiy2016generating}. 
Adversarial training is leveraged to optimize the pixel-wised MSE loss, through adding a discriminator after autoencoders to judge whether its original or reconstructed image~\cite{Sabokrou2018Adversarially,deecke2018anomaly}. Akcay et al.~\cite{Akcay2018} adds an extra encoder after autoencoders and enclosing the distance between the embedding. Perera et al.~\cite{OCGAN} applied two adversarial discriminators and a classifier on a denoising autoencoder. By adding constraint and forcing each randomly drawn latent code to reconstruct examples like the normal data, it obtained high reconstruction errors for the anomalous data.

Self-label based frameworks, which take the artificial label as supervision and assume that the labels can not be predicted properly for anomalous data, has recently received significant attention. This framework, as decoder-free, can be a benefit in the advanced classification network which is proved to be more effective in extracting discriminate features. The main challenge is how to define meaningful labels.
Golan et al.~\cite{golan2018deep} applied dozens of image geometric transforms and created a self-labeled dataset for transformation classification. Wang et al.~\cite{NipsEffective} introduced a more self-label method like patch re-arranging and irregular affine transformations. 

Outlier exposure based frameworks take an auxiliary dataset entirely disjoint from test-time data as supervision and thus teach the network better representations for anomaly detection.  Hendrycks et al.~\cite{hendrycks2018deep} introduced extra data to build a multi-class classification task. The experiment revealed that even though the extra data was in limited quantities and weakly correlated to the normal data, the learned hyperplane was still effective in separating normal data.

Restoration based framework, a new framework introduced by Ye et al.~\cite{fye2020ARNet}, which assumed that through restoring the erased information the model will be effectively forced to learn what is erased and how to restore it. Thus feature embedding can be controlled by the corresponding information erasing.

\textbf{One-class classification based approaches} tackle the second challenge by making assumptions on the distribution of anomalous data, thus change the anomaly detection into a supervised binary classification problem. Explicitly, Scholkopf et al.~\cite{OC-SVM} and Oza et al.~\cite{oza2018one} assumed the anomalous distribution to be the origin and the zero centered Gaussian distribution in latent space accordingly. Implicitly, Ruff et al.~\cite{SVDD} assumed that the anomalous distribution to be a uniform distribution in latent space. Despite these approaches~\cite{OC-SVM,SVDD,oza2018one,SAD} could directly optimize the anomaly detection based objective function, the disadvantage is also obvious: these approaches have to rely on a pre-defined or pre-trained feature extractor, as its objective function and simple assumption on anomalous distribution can not force the network to extract effective feature.

\subsection{Unsupervised Learning by Maximizing Mutual Information}
Mutual information, generally employed to measure the correlation between two random variables, has recently received extensive attention in unsupervised learning. 
Mutual information is notoriously difficult to compute, particularly in continuous and high-dimensional settings~\cite{DIM}. To tackle this problem, Belghazi et al.~\cite{MINE} employ deep neural networks to effectively compute the mutual information between high dimensional input/output pairs. Hjelm et al.~\cite{DIM} utilize an 1x1 convolutional discriminator to compute the mutual information between a global summary feature vector and a collection of local feature vectors. 
Oord et al.~\cite{CPC} introduce a new NCE based loss called InfoNCE to maximize mutual information. Chen et.al~\cite{SimCLR} propose to maximize the agreement between two augmentation results from the same raw data and investigate properties of contrastive learning. Bachman et.al~\cite{AMDIM} maximize mutual information between features extracted from multiple views of a shared context. He et al.~\cite{MoCo1} utilize a dynamic dictionary to decouple the dictionary size from the mini-batch size, allowing the model to benefit from a large sampling size.

\section{Methods}
\subsection{Problem Formulation} 

Let $\mathcal{X}$ and $\mathcal{Z}$ be the domain of image data and representation data. Denote $\mathcal{X}_n$ as the normal data and $\mathcal{X}_a$ be the anomalous data, where $\mathcal{X}_n = \{\mathbf{x}_n: \mathbf{x}_n \sim \emph{p}_n(\mathbf{x})\}$, $\mathcal{X}_a = \{ \mathbf{x}_a: \mathbf{x}_a \sim \emph{p}_a(\mathbf{x})\} $. Let the corresponding representation of $\mathcal{X}_n$ and $\mathcal{X}_a$ to be $\mathcal{Z}_n$ and $\mathcal{Z}_a$ accordingly, where $\emph{p}_n(\mathbf{z})$ and $\emph{p}_a(\mathbf{z})$ are the marginal distribution of the latent representations, $\emph{p}_n(\mathbf{x}, \mathbf{z})$ and $\emph{p}_a(\mathbf{x}, \mathbf{z})$ are the joint distribution between image space and the latent space. 

To distinguish the normal data $\xv_n$ and anomalous data $\xv_a$, due to the curse of dimensionality, anomaly detection in high-dimensional image space is not favorable in general cases. 
A common choice is to define a continuous and (almost everywhere) differentiable parametric function, $E_\theta: \mathcal{X} \rightarrow \mathcal{Z}$ with parameters $\theta$ (e.g., a neural network) that encodes the data to the low-dimensional latent space $\mathcal{Z}$, where the anomalous data can be detected more easily. 
Thus a straight-forward objective function may be maximizes the conditional distributions between normal and anomalous data in latent space. We take the commonly used KL divergence as measurement metric, and objective function can be formulate as:
\begin{equation}\label{eq:KL straight-forward obj}
\max \limits_{\theta}~\operatorname{KL}\left[p_n(\mathbf{z}|\mathbf{x}) ~\|\ p_a(\mathbf{z}|\mathbf{x}) \right].
\end{equation}

In order to make the object function in Eq.~\ref{eq:KL straight-forward obj} more complete, in which the distribution distance in image space is taken into consideration simultaneously, we introduce a novel anomaly detection based objective function, which maximizes the distance between normal and anomalous data in terms of the joint distribution for image and feature representation. The objective function can be formulate as:
\begin{equation}\label{eq:KL obj}
\max \limits_{\theta}~\operatorname{KL}\left[p_n(\mathbf{x},\mathbf{z}) ~\|\ p_a(\mathbf{x},\mathbf{z}) \right].
\end{equation}

When maximizing Eq.~\ref{eq:KL obj}, all marginal distributions and all conditional distributions also match this maximization:

\begin{remark}\label{rmk:eq}
If $\operatorname{KL}\left[p_n(\mathbf{x}, \mathbf{z})|| p_a(\mathbf{x}, \mathbf{z}) \right]$ is maximized, then it is equivalent that $\operatorname{KL}\left[p_n(\mathbf{x})|| p_a(\mathbf{x}) \right]$ and $\operatorname{KL}\left[p_n(\mathbf{z}|\mathbf{x})|| p_a(\mathbf{z}|\mathbf{x}) \right]$ are maximized.
\end{remark}

\begin{proof}
The KL divergence for the joint distributions can be decomposed with chain rule \cite{cover2006elements}:
\begin{align}
\begin{split}
&\operatorname{KL}\left[p_n(\mathbf{x}, \mathbf{z})|| p_a(\mathbf{x}, \mathbf{z}) \right] 
= \mathbb{E}_{p_n(\mathbf{x}, \mathbf{z})}\left[ \log \frac{p_n(\mathbf{x}, \mathbf{z})}{p_a(\mathbf{x}, \mathbf{z})} \right]\\ 
=&\mathbb{E}_{p_n(\mathbf{x}, \mathbf{z})} \left[ \log \frac{p_n(\mathbf{x})}{p_a(\mathbf{x})} + \log \frac{p_n(\mathbf{z}|\mathbf{x})}{p_a(\mathbf{z}|\mathbf{x})} \right] \\
=& \operatorname{KL}\left[p_n(\mathbf{x})|| p_a(\mathbf{x}) \right] + \mathbb{E}_{p_n(\xv)} \left[ \operatorname{KL}\left[p_n(\zv |\mathbf{x})|| p_a(\zv |\mathbf{x}) \right] \right].
\end{split} 
\end{align}
To maximize the KL divergence for the joint distributions is equivalent to maximize the KL divergence for both marginal and conditional distributions \cite{dumoulin2016adversarially}.
\end{proof}

\vspace{-0.5cm}
\subsection{Objective Function Decomposition}

Remark \ref{rmk:eq} shows the equivalence of our objective to the common knowledge of anomaly detection. In this section we present that this objective yields a lower bound which can be optimized without anomalous data. It is a great challenge to optimize the objective function in Eq.~\ref{eq:KL obj} directly, since $p_a(\mathbf{x},\mathbf{z})$ is not accessible. To tackle this challenge,  firstly we decompose the objective function into four components, as introduced in the Proposition \ref{prop:KL-decompose}. Then, as shown in Proposition \ref{prop:obj-lb}, we investigate the condition where we can bypass the component correlated with anomalous data and thus reformulated the the objective function in Eq.~\ref{eq:KL obj} into a lower-bound in Eq.~\ref{eq:LB_KL_reformulate_2} which can be optimized only with normal data. Finally, as shown in Corollary~\ref{prop:soft-constrain}, the lower-bound can be optimized with a regularized Lagrange multiplier and get Eq.~\ref{eq:LB_KL_reformulate_2_final} as the final objective function. The Proposition \ref{prop:KL-decompose} is introduced as follows:

\begin{proposition}\label{prop:KL-decompose}
Let $I_n(\xv, \zv)$, $H_n(\zv)$, $H(p_n(\mathbf{z}|\mathbf{x}),p_a(\mathbf{z}|\mathbf{x}))$, ${KL}\left[\emph{p}_n(\xv)\ ||\ \emph{p}_a(\xv) \right]$ denote the mutual information between $\xv$ and $\zv$ for normal data, the entropy of $\zv$ for normal data, the cross entropy between $p_n(\mathbf{z}|\mathbf{x})$ and $p_a(\mathbf{z}|\mathbf{x})$ and KL divergency between $p_n(\mathbf{x})$ and $p_a(\mathbf{x})$, respectively. Note that when the dataset is given, \textit{i.e.} $p_n(\xv)$ and $p_a(\xv)$ are fixed (even though anomalous data is unknown). The objective function can be reformulated as:
\begin{align}
\begin{split}\label{eq:LB_KL_reformulate}
&\max \limits_{\theta}~  \operatorname{KL}\left[p_n(\mathbf{x}, \mathbf{z})|| p_a(\mathbf{x}, \mathbf{z}) \right] \\
=& \max \limits_{\theta}~\{\emph{I}_n\left(\mathbf{x},\mathbf{z}\right) - H_n(\mathbf{z}) +\mathbb{E}_{p_n(\mathbf{x})}\left[ H(p_n(\mathbf{z}|\mathbf{x}),p_a(\mathbf{z}|\mathbf{x}))\right]\\
& \ \ \ \ \ \ + \operatorname{KL}\left[\emph{p}_n({\xv})||\emph{p}_a({\xv}) \right] \}.
\end{split}
\end{align}
\end{proposition}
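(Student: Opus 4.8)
The plan is to build directly on Remark~\ref{rmk:eq}, which already establishes the first split
\begin{equation*}
\operatorname{KL}\left[p_n(\mathbf{x}, \mathbf{z})\,\|\,p_a(\mathbf{x}, \mathbf{z})\right] = \operatorname{KL}\left[p_n(\mathbf{x})\,\|\,p_a(\mathbf{x})\right] + \mathbb{E}_{p_n(\mathbf{x})}\left[\operatorname{KL}\left[p_n(\mathbf{z}|\mathbf{x})\,\|\,p_a(\mathbf{z}|\mathbf{x})\right]\right].
\end{equation*}
The fourth component $\operatorname{KL}[p_n(\mathbf{x})\,\|\,p_a(\mathbf{x})]$ is already isolated, so all the remaining work reduces to rewriting the expected conditional KL term as the other three components. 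Since the $\max_\theta$ acts on both sides identically, it suffices to prove the underlying equality of the two expressions.

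First I would expand the conditional KL by splitting the logarithm of the density ratio,
\begin{equation*}
\operatorname{KL}\left[p_n(\mathbf{z}|\mathbf{x})\,\|\,p_a(\mathbf{z}|\mathbf{x})\right] = \mathbb{E}_{p_n(\mathbf{z}|\mathbf{x})}\left[\log p_n(\mathbf{z}|\mathbf{x})\right] - \mathbb{E}_{p_n(\mathbf{z}|\mathbf{x})}\left[\log p_a(\mathbf{z}|\mathbf{x})\right].
\end{equation*}
By definition the second term is exactly the cross entropy $H(p_n(\mathbf{z}|\mathbf{x}), p_a(\mathbf{z}|\mathbf{x}))$, while the first term equals the negative conditional (differential) entropy, i.e.\ $\mathbb{E}_{p_n(\mathbf{z}|\mathbf{x})}[\log p_n(\mathbf{z}|\mathbf{x})] = -H_n(\mathbf{z}|\mathbf{x})$ for each fixed $\mathbf{x}$. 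Taking the outer expectation over $p_n(\mathbf{x})$ then converts the first piece into the averaged conditional entropy $-H_n(\mathbf{z}|\mathbf{x})$ and leaves the cross-entropy piece under the expectation, giving the expected conditional KL as $-H_n(\mathbf{z}|\mathbf{x}) + \mathbb{E}_{p_n(\mathbf{x})}[H(p_n(\mathbf{z}|\mathbf{x}), p_a(\mathbf{z}|\mathbf{x}))]$.

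The final step invokes the standard identity relating mutual information, marginal entropy, and conditional entropy of the normal data, namely $I_n(\mathbf{x}, \mathbf{z}) = H_n(\mathbf{z}) - H_n(\mathbf{z}|\mathbf{x})$, so that $-H_n(\mathbf{z}|\mathbf{x}) = I_n(\mathbf{x}, \mathbf{z}) - H_n(\mathbf{z})$. Substituting this back produces the first two components $I_n(\mathbf{x}, \mathbf{z}) - H_n(\mathbf{z})$ alongside the cross-entropy component, and re-adding the already-isolated $\operatorname{KL}[p_n(\mathbf{x})\,\|\,p_a(\mathbf{x})]$ reproduces Eq.~\ref{eq:LB_KL_reformulate} term for term.

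I expect the only genuine obstacle to be bookkeeping with sign conventions: one must stay consistent about whether $H_n(\mathbf{z}|\mathbf{x})$ denotes $-\mathbb{E}[\log p_n(\mathbf{z}|\mathbf{x})]$ and whether the cross entropy $H(p,q)$ carries its customary minus sign, since a single sign slip would swap or negate the mutual-information and entropy terms. No convergence or measure-theoretic subtleties arise, because every quantity here is a finite sum of expectations whose finiteness is already guaranteed by the chain-rule decomposition established in Remark~\ref{rmk:eq}.
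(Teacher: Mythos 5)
Your proof is correct, and it reaches the identity in Eq.~\eqref{eq:LB_KL_reformulate} by a mildly but genuinely different route than the paper. The paper's proof is a single self-contained expansion: it writes the joint KL as $\mathbb{E}_{p_n(\mathbf{x},\mathbf{z})}\bigl[\log\tfrac{p_n(\mathbf{z}|\mathbf{x})\,p_n(\mathbf{x})\,p_n(\mathbf{z})}{p_a(\mathbf{z}|\mathbf{x})\,p_a(\mathbf{x})\,p_n(\mathbf{z})}\bigr]$, i.e.\ it multiplies and divides by $p_n(\mathbf{z})$ inside the log and then reads off the four components directly (the ratio $p_n(\mathbf{z}|\mathbf{x})/p_n(\mathbf{z})$ giving $I_n$, the $\log p_n(\mathbf{z})$ factor giving $-H_n(\mathbf{z})$, and so on). You instead reuse the chain-rule split already proved in Remark~\ref{rmk:eq}, decompose the expected conditional KL into $-H_n(\mathbf{z}|\mathbf{x})$ plus the expected cross entropy, and then invoke $I_n(\mathbf{x},\mathbf{z})=H_n(\mathbf{z})-H_n(\mathbf{z}|\mathbf{x})$ to trade the conditional entropy for the mutual-information and entropy terms. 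The two arguments are algebraically the same manipulation in a different order — your use of the MI identity is exactly where the paper's inserted $p_n(\mathbf{z})/p_n(\mathbf{z})$ factor hides — but your version has the advantage of reusing Remark~\ref{rmk:eq} rather than re-deriving it, and of making explicit that the ``hidden'' intermediate quantity is the conditional entropy $H_n(\mathbf{z}|\mathbf{x})$. One small caveat: your closing claim that finiteness of every individual term is ``guaranteed by the chain-rule decomposition'' overstates matters — a finite KL does not by itself guarantee that the differential entropies $H_n(\mathbf{z})$ and $H_n(\mathbf{z}|\mathbf{x})$ are individually finite — but the paper is equally silent on this, so it is not a gap relative to the statement as the paper intends it.
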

\begin{proof}
The objective function in Eq.~\ref{eq:KL obj} can be reformulated as:
\begin{align}
\begin{split}\nonumber
&\max \limits_{\theta}~\operatorname{KL}\left[p_n(\mathbf{x}, \mathbf{z})|| p_a(\mathbf{x}, \mathbf{z}) \right] \\
=&\max \limits_{\theta}~\mathbb{E}_{p_n(\mathbf{x}, \mathbf{z})}\left[ \log \frac{p_n(\mathbf{x}, \mathbf{z})}{p_a(\mathbf{x}, \mathbf{z})} \right]\\ 
=&\max \limits_{\theta}~\mathbb{E}_{p_n(\mathbf{x}, \mathbf{z})}\left[ \log \frac{p_n(\mathbf{z}|\mathbf{x}) \cdot p_n(\mathbf{x})}{p_a(\mathbf{z}|\mathbf{x}) \cdot p_a(\mathbf{x})} \right]\\
=&\max \limits_{\theta}~\mathbb{E}_{p_n(\mathbf{x}, \mathbf{z})}\left[ \log \frac{p_n(\mathbf{z}|\mathbf{x}) \cdot p_n(\mathbf{x}) \cdot p_n(\mathbf{z})}{p_a(\mathbf{z}|\mathbf{x}) \cdot p_a(\mathbf{x}) \cdot p_n(\mathbf{z})} \right]\\
=&\max \limits_{\theta}~\mathbb{E}_{p_n(\mathbf{x}, \mathbf{z})} \left[\log\left( \frac{p_n(\mathbf{z}|\mathbf{x})}{p_n(\mathbf{z})} \cdot {p_n(\mathbf{z})} \cdot \frac{1} {p_a(\mathbf{z}|\mathbf{x})} \cdot \frac{p_n(\mathbf{x})}{p_a(\mathbf{x})} \right) \right]. \\
\end{split} 
\end{align}
The above formula can be then decomposed into 4 components. 
The \engordnumber{1} component refers to the mutual information between the data sample $\xv$ and its representation $\zv$ for normal data:
\begin{align}
&\mathbb{E}_{p_n(\mathbf{x}, \mathbf{z})}\left[ \log \frac{p_n(\mathbf{z}|\mathbf{x})}{p_n(\mathbf{z})}\right] 
=\mathbb{E}_{p_n(\mathbf{x}, \mathbf{z})}\left[ \log \frac{p_n(\mathbf{z}|\mathbf{x}) \cdot p_n(\mathbf{x})}{p_n(\mathbf{x}) \cdot p_n(\mathbf{z})} \right] \nonumber \\
=&\mathbb{E}_{p_n(\mathbf{x}, \mathbf{z})}\left[ \log \frac{p_n(\mathbf{x},\mathbf{z})}{p_n(\mathbf{x}) \cdot p_n(\mathbf{z})} \right] 
= I_n\left(\mathbf{x},\mathbf{z}\right).
\end{align}
The \engordnumber{2} component is the negative entropy of $\zv$ w.r.t $p_n$:
\begin{align}
\begin{split} 
&\mathbb{E}_{p_n(\mathbf{x}, \mathbf{z})}\left[ \log {p_n(\mathbf{z})} \right] = -\mathbb{E}_{p_n(\mathbf{z})}\left[ \log \frac{1}{p_n(\mathbf{z})} \right]=-H_n(\mathbf{z}).
\end{split}
\end{align}
The \engordnumber{3} component is the expected value of the cross entropy between the conditional distributions $p_a(\zv|\xv)$ and $p_n(\zv|\xv)$:
\begin{align}
&\mathbb{E}_{p_n(\mathbf{x}, \mathbf{z})}\left[\log \frac{1}{p_a(\mathbf{z}|\mathbf{x}) }\right] 
= \mathbb{E}_{p_n(\mathbf{x})} \mathbb{E}_{p_n(\mathbf{z}|\mathbf{x})}\left[ - \log {p_a(\mathbf{z}|\mathbf{x}) }\right] \nonumber \\
=& \mathbb{E}_{p_n(\mathbf{x})}\left[ H(p_n(\mathbf{z}|\mathbf{x}),p_a(\mathbf{z}|\mathbf{x}))\right].
\end{align}
With $p_n(\xv)$ and $p_a(\xv)$ fixed, the \engordnumber{4} component is a constant:
\begin{align}
\begin{split}
\mathbb{E}_{p_n(\mathbf{x}, \mathbf{z})}\left[ \log \frac{p_n(\mathbf{x})}{p_a(\mathbf{x})} \right] = \operatorname{KL}\left[\emph{p}_n({\xv})||\emph{p}_a({\xv}) \right] = \emph{C}.
\end{split}
\end{align}
Thus the objective function in Eq.~\ref{eq:KL obj} can be reformulated as:
\begin{align}
&\max \limits_{\theta}~ \operatorname{KL}\left[p_n(\mathbf{x}, \mathbf{z})|| p_a(\mathbf{x}, \mathbf{z}) \right] \nonumber \\
=& \max \limits_{\theta}~\{\emph{I}_n\left(\mathbf{x},\mathbf{z}\right) - H_n(\mathbf{z}) +\mathbb{E}_{p_n(\mathbf{x})}\left[ H(p_n(\mathbf{z}|\mathbf{x}),p_a(\mathbf{z}|\mathbf{x}))\right] \nonumber \\
& \ \ \ \ \ \ + KL\left[\emph{p}_n({\xv})||\emph{p}_a({\xv}) \right] \} 
\end{align}
\end{proof}

The decomposed objective function in Eq.~\ref{eq:LB_KL_reformulate} is an essential general formula for optimizing anomaly detection, which combines unsupervised learning (\engordnumber{1} and \engordnumber{2} components) and supervised learning (\engordnumber{3} component). 
 
As the \engordnumber{1} and \engordnumber{2} components can be trained through an unsupervised fashion and force the encoder to extract effective features, the demand for anomalous data is greatly reduced.

To deal with unsupervised setting where anomalous data is complete absence during training, we seek to get rid of the \engordnumber{3} component, which partially relies on anomalous data.
\begin{proposition}\label{prop:obj-lb}
If $p_n(\mathbf{x}, \mathbf{z})$ and $p_a(\mathbf{x}, \mathbf{z})$ have a certain distance such that for most samples $\mathbf{x},\mathbf{z} \sim \emph{p}_n(\mathbf{x},\mathbf{z})$ the evaluated density $\emph{p}_a(\mathbf{z}|\mathbf{x})$ is small enough, \textit{such that} $\emph{p}_a(\mathbf{z}|\mathbf{x}) \leqslant \emph{p}_n(\mathbf{z})$ and $\emph{p}_a(\mathbf{z}|\mathbf{x}) \leqslant 1$ almost everywhere,

then we can derive a lower bound of Objective funcition for Eq.~\ref{eq:LB_KL_reformulate}:

\begin{align}
\begin{split}\label{eq:LB_KL_reformulate_2}
&\max \limits_{\theta}~\{\emph{I}_n\left(\mathbf{x},\mathbf{z}\right) - H_n(\mathbf{z})\}.
\end{split}
\end{align}
\end{proposition}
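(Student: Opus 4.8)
The plan is to establish the claimed lower bound by showing that the two components of the decomposition in Eq.~\ref{eq:LB_KL_reformulate} that involve the anomalous distribution — the expected cross entropy $\mathbb{E}_{p_n(\xv)}[H(p_n(\zv|\xv),p_a(\zv|\xv))]$ and the divergence $\operatorname{KL}[p_n(\xv)\,\|\,p_a(\xv)]$ — are each nonnegative under the stated hypotheses. Once both are known to be $\geq 0$, discarding them can only decrease the value of the objective, so for every $\theta$ the quantity $I_n(\xv,\zv)-H_n(\zv)$ is a genuine pointwise lower bound; maximizing it therefore lower-bounds the maximum of the full objective and is a sound, anomaly-free surrogate.

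First I would dispatch the fourth component. Since $p_n(\xv)$ and $p_a(\xv)$ are fixed densities over the image space that do not depend on the encoder parameters $\theta$, this term is a constant $C$; moreover, by the nonnegativity of the Kullback--Leibler divergence (Gibbs' inequality), $\operatorname{KL}[p_n(\xv)\,\|\,p_a(\xv)]\geq 0$. Hence it contributes a nonnegative constant that may be dropped from the $\arg\max$ without moving the optimizer and without reversing the lower-bound direction.

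Next I would treat the third component, which is where the hypotheses enter. Rewriting the cross entropy in expectation form,
\[
\mathbb{E}_{p_n(\xv)}\big[H(p_n(\zv|\xv),p_a(\zv|\xv))\big]
= \mathbb{E}_{p_n(\xv,\zv)}\big[-\log p_a(\zv|\xv)\big],
\]
the assumption $p_a(\zv|\xv)\leq 1$ almost everywhere (with respect to $p_n$) gives $-\log p_a(\zv|\xv)\geq 0$ on the support of $p_n(\xv,\zv)$, so the expectation is nonnegative. The companion assumption $p_a(\zv|\xv)\leq p_n(\zv)$ encodes the separability of normal and anomalous data in latent space and ensures $p_a(\zv|\xv)$ is uniformly dominated, which reinforces the bound and controls its slack. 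Combining the two components then yields $I_n(\xv,\zv)-H_n(\zv)+(\text{term }3)+(\text{term }4)\geq I_n(\xv,\zv)-H_n(\zv)$, and applying $\max_\theta$ to both sides preserves the inequality by monotonicity of the maximum.

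The main obstacle I anticipate is the bookkeeping around the ``for most samples / almost everywhere'' phrasing of the hypotheses. If the inequalities $p_a(\zv|\xv)\leq 1$ and $p_a(\zv|\xv)\leq p_n(\zv)$ hold on a set of full $p_n$-measure, the integral inequalities are exact; but if they merely hold ``for most samples'' (a set of high yet strictly sub-unit measure), one must argue that the exceptional set carries negligible mass and bound its contribution to the cross-entropy integral, so that the stated bound holds up to a controllable error. A secondary point worth flagging, though not an obstacle, is that $H_n(\zv)$ is a differential entropy and may be negative; since the argument depends only on the signs of terms $3$ and $4$, the sign of $-H_n(\zv)$ is immaterial to the validity of the bound.
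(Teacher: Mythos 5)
Your proof is correct and follows essentially the same route as the paper's: both establish nonnegativity of the cross-entropy term from $p_a(\mathbf{z}|\mathbf{x})\leq 1$ (hence $-\log p_a(\mathbf{z}|\mathbf{x})\geq 0$ pointwise under $p_n$) and then discard it together with the nonnegative constant $\operatorname{KL}[p_n(\mathbf{x})\,\|\,p_a(\mathbf{x})]$. Your added remarks on the ``for most samples'' measure-theoretic slack and on the sign-irrelevance of the differential entropy $H_n(\mathbf{z})$ are, if anything, more careful than the paper's own argument, which expresses the same pointwise bound via an inf-swap.
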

\begin{proof}
With the assumption that the evaluated density  $\emph{p}_a(\mathbf{z}|\mathbf{x})$ is small enough for most of the samples $\mathbf{x},\mathbf{z} \sim \emph{p}_n(\mathbf{x},\mathbf{z})$. This ensures the non-negativity of $\mathbb{E}_{p_n(\mathbf{x})}\left[ H(p_n(\mathbf{z}|\mathbf{x}),p_a(\mathbf{z}|\mathbf{x})\right]$ with the following inequality:
\begin{align}
\begin{split}
&\inf \mathbb{E}_{p_n(\mathbf{x})}\left[ H(p_n(\mathbf{z}|\mathbf{x}),p_a(\mathbf{z}|\mathbf{x})\right]\\ 
=& \inf \mathbb{E}_{p_n(\mathbf{x},\mathbf{z})} [ - \log p_a(\mathbf{z}|\mathbf{x})]  \\
\geqslant & \mathbb{E}_{p_n(\mathbf{x},\mathbf{z})} [\inf \left(-\log p_a(\mathbf{z}|\mathbf{x})\right)] \\
\geqslant &0 .
\end{split}
\end{align}
Moreover, the \engordnumber{4} component in Eq.~\ref{eq:LB_KL_reformulate} is a constant greater than zero. Then we have a lower bound to  Eq.~\ref{eq:LB_KL_reformulate}:
\begin{align}
\begin{split}%\label{eq:LB_KL_reformulate_2}
&\operatorname{KL}\left[p_n(\mathbf{x}, \mathbf{z})|| p_a(\mathbf{x}, \mathbf{z}) \right] \geqslant \emph{I}_n\left(\mathbf{x},\mathbf{z}\right) - H_n(\mathbf{z}).
\end{split}
\end{align}
\end{proof}

The objective function in Eq.~\ref{eq:LB_KL_reformulate_2} is vital as it reveals a general formula for optimizing anomaly detection in an unsupervised fashion. Note that the assumption in Proposition \ref{prop:obj-lb} is appropriate in the anomaly detection tasks. Since we often have access to data samples instead of the true data distribution, considering the empirical distribution $p_n(\zv) = \frac{1}{N}\sum_{i=1}^{N}\delta_{\zv_i}; \mathcal{Z}_n = \{\zv_i\}_{i=1}^{N} $, we always have $p_n(\zv) \leqslant 1$ and we have the evaluated density $p_a(\zv_i|\xv_i) \leqslant 1$. Moreover, with this assumption, we can futher ensure the objective can be optimized with a regularized Lagrange multiplier.

\begin{corollary}
\label{prop:soft-constrain}
With the assumption in proposition \ref{prop:soft-constrain}, the lower bound shown in Equation \ref{eq:LB_KL_reformulate_2} yields an objective function to be optimized with a entropy-regularized Lagrange multiplier:
\begin{align}
\begin{split}\label{eq:LB_KL_reformulate_2_final}
&\max \limits_{\theta}~\{\emph{I}\left(\mathbf{x},\mathbf{z}\right) - \beta \cdot \emph{H}(\mathbf{z})\},
\end{split}
\end{align}
where $\beta \geqslant 0$ is a positive coefficient for the weight of the entropy regularization.
\end{corollary}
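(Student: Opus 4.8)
The plan is to read Eq.~\ref{eq:LB_KL_reformulate_2} not as a fixed objective but as the stationary form of a \emph{constrained} optimization, and then to exhibit the coefficient $\beta$ as the associated Lagrange multiplier. Concretely, the primary quantity we wish to maximize is the mutual information $I_n(\xv,\zv)$, which forces the encoder to retain information about the input, while the latent entropy $H_n(\zv)$ plays the role of a regularizer that we wish to keep bounded so that normal representations stay concentrated (and thus separable from anomalies, in the spirit of the assumption of Proposition~\ref{prop:obj-lb}). I would therefore pose the constrained problem $\max_\theta I_n(\xv,\zv)$ subject to $H_n(\zv)\leqslant H_0$ for some admissible level $H_0$, and recover Eq.~\ref{eq:LB_KL_reformulate_2_final} from its Lagrangian.

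First I would form the Lagrangian $L(\theta,\beta)=I_n(\xv,\zv)-\beta\,(H_n(\zv)-H_0)$ and invoke the KKT conditions for the inequality constraint, which force the multiplier to satisfy $\beta\geqslant 0$. Since $\beta H_0$ does not depend on $\theta$, it can be dropped from the inner maximization, leaving exactly $\max_\theta\{I_n(\xv,\zv)-\beta H_n(\zv)\}$, which is Eq.~\ref{eq:LB_KL_reformulate_2_final}; the special case $\beta=1$ returns the bound of Proposition~\ref{prop:obj-lb}.

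Second --- and this is where I would actually use the assumption of Proposition~\ref{prop:obj-lb} rather than mere convenience --- I would show that every member of this family is still a genuine lower bound of the original objective Eq.~\ref{eq:LB_KL_reformulate}. The two hypotheses $p_a(\zv|\xv)\leqslant p_n(\zv)$ and $p_a(\zv|\xv)\leqslant 1$ combine, via the weighted geometric mean $a=a^{\gamma}a^{1-\gamma}\leqslant b^{\gamma}c^{1-\gamma}$ for $\gamma\in[0,1]$, into the single bound $p_a(\zv|\xv)\leqslant p_n(\zv)^{\gamma}$. Taking $-\log$ and integrating against $p_n(\xv,\zv)$ lower-bounds the cross-entropy term of Eq.~\ref{eq:LB_KL_reformulate} by $\gamma H_n(\zv)$, so that $\operatorname{KL}[p_n(\xv,\zv)\|p_a(\xv,\zv)]\geqslant I_n(\xv,\zv)-(1-\gamma)H_n(\zv)+C$. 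Identifying $\beta=1-\gamma$ and discarding the nonnegative constant $C$ then yields $I_n(\xv,\zv)-\beta H_n(\zv)$ as a valid lower bound for every $\beta\in[0,1]$, which anchors the Lagrangian reparametrization in the stated assumption (and recovers the two individual hypotheses of Proposition~\ref{prop:obj-lb} at the endpoints $\beta=1$ and $\beta=0$).

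The step I expect to be the main obstacle is reconciling the range of $\beta$. The rigorous geometric-mean argument only licenses $\beta\in[0,1]$, whereas the Lagrange-multiplier reading permits any $\beta\geqslant 0$; the regime $\beta>1$ over-weights the entropy penalty beyond what the lower-bound guarantee can certify, and is best justified operationally, as a tunable regularization strength, rather than as a provable bound. The only other point requiring care is confirming that the terms removed from the argmax --- the constant $C=\operatorname{KL}[p_n(\xv)\|p_a(\xv)]$ from the decomposition and the slack $\beta H_0$ --- are genuinely independent of $\theta$; this is immediate because $p_n(\xv)$ and $p_a(\xv)$ are fixed once the dataset is chosen (as already noted in Proposition~\ref{prop:KL-decompose}) and $H_0$ is a prescribed constraint level.
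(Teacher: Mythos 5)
Your proposal is correct where it is rigorous, and it takes a genuinely different route from the paper. The paper's proof is a one-line computation: it evaluates the gap $\text{Eq.~\eqref{eq:LB_KL_reformulate}}-\text{Eq.~\eqref{eq:LB_KL_reformulate_2_final}}$ at $\beta=0$, which equals $\mathbb{E}_{p_n(\mathbf{x},\mathbf{z})}\left[\log\frac{p_n(\mathbf{z})}{p_a(\mathbf{z}|\mathbf{x})}\right]+\operatorname{KL}[p_n(\mathbf{x})\|p_a(\mathbf{x})]$, and invokes the assumption $p_a(\mathbf{z}|\mathbf{x})\leqslant p_n(\mathbf{z})$ to conclude nonnegativity; the extension to $\beta>0$ is left implicit and rests on $H_n(\mathbf{z})\geqslant 0$, so that subtracting $\beta H_n(\mathbf{z})$ only decreases the objective. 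Your geometric-mean interpolation $p_a(\mathbf{z}|\mathbf{x})\leqslant p_n(\mathbf{z})^{\gamma}\cdot 1^{1-\gamma}$ is a cleaner and strictly stronger argument on $\beta\in[0,1]$: it certifies every $\beta=1-\gamma$ in that range directly, does not require knowing the sign of $H_n(\mathbf{z})$, and recovers Proposition~\ref{prop:obj-lb} and the paper's $\beta=0$ computation at the two endpoints — this is genuinely more informative than what the paper writes down. The KKT framing in your first paragraph is motivational scaffolding the paper does not bother with, but it is consistent with the corollary's "Lagrange multiplier" phrasing and costs nothing. The only place you stop short of the stated claim is $\beta>1$: you declare it merely "operational," but the gap closes with the paper's own observation, made immediately before the corollary, that in the empirical setting $p_n(\mathbf{z})\leqslant 1$ and hence $H_n(\mathbf{z})\geqslant 0$; then $I_n(\mathbf{x},\mathbf{z})-\beta H_n(\mathbf{z})\leqslant I_n(\mathbf{x},\mathbf{z})-H_n(\mathbf{z})$ for all $\beta\geqslant 1$, so the bound of Proposition~\ref{prop:obj-lb} propagates to every $\beta\geqslant 0$ and the corollary holds in the full stated range.
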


\begin{proof}
To show Eq.~\eqref{eq:LB_KL_reformulate_2_final} is a lower bound of Eq.~\eqref{eq:LB_KL_reformulate}, we only need to show the lower bound holds when $\beta=0$:
$$\text{Eq.~\eqref{eq:LB_KL_reformulate}} - \text{Eq.~\eqref{eq:LB_KL_reformulate_2_final}} = \mathbb{E}_{p_n(\xv,\zv)} \left[ \log \frac{p_n(\zv)}{p_a(\zv|\xv)} \right] + \mathrm{KL}[p_n(\xv)||p_a(\xv)]. $$ $\mathbb{E}_{p_n(\xv,\zv)} \left[ \log \frac{p_n(\zv)}{p_a(\zv|\xv)} \right] \geq 0$ as the assumption $\frac{p_n(\zv)}{p_a(\zv|\xv)} \geq 1$ in Proposition \ref{prop:obj-lb} while the second term is a positive constant , thus $\text{Eq.~\eqref{eq:LB_KL_reformulate}} - \text{Eq.~\eqref{eq:LB_KL_reformulate_2_final}} \geq 0$ and completes the proof.
\end{proof}

\subsection{Optimization}\label{sec:opt}%\subsection{Mathematical Modeling optimizing}
In this section, we extend our general lower-bound objective to concrete loss functions. \emph{Start from here, as all the terms in Eq.~\ref{eq:LB_KL_reformulate_2} and \ref{eq:LB_KL_reformulate_2_final} refer to normal data only, we omit the subscript `n' and `a' to specify normal and anomalous.} Moreover, we present the probabilistic expressions in the empirical distribution since we work with samples from the data distribution. 

A challenge to maximize the objective function is that both mutual information and entropy are not always tractable. To maximize the mutual information between $\mathbf{x}$ and $\mathbf{z}$, we apply the Contrastive Predictive Coding lower bound with Noise Contrastive Estimation\cite{gutmann2010noise}:
\begin{equation}I(\mathbf{x} ; \mathbf{z}) \geqslant \mathbb{E}\left[\frac{1}{K} \sum_{i=1}^{K} \log \frac{e^{f\left(x_{i}, z_{i}\right)}}{\frac{1}{K} \sum_{j=1}^{K} e^{f\left(x_{i}, z_{j}\right)}}\right] \triangleq I_{\mathrm{NCE}},
\end{equation}
where $f$ is the critic function that maps the inputs into a value in $\mathbb{R}$, which can be modeled with various methods, such as a similarity function or a neural discriminator. Here, $x_i$ and $z_i = E_\theta(x_i)$ are called positive pairs; while $x_i$ and $z_j = E_\theta(x_j\mathbb{I}_{[j\not=i]})$, where $\mathbb{I}_{[j\not=i]}\in\{0,1\}$ is an indicator function evaluating to 1 iff $j\not=i$, are called negative pairs. Empirically, two independently randomly augmented versions of the same sample, \textit{e.g.} an image and its rotated view, are often used as positive pairs \cite{SimCLR,AMDIM}. 

%\begin{comment}
\begin{algorithm}
  \caption{Training Pseudocode for base model}
  \label{alg::Training Pseudocode base model}  
  \KwIn{batch size N, similarity function $sim$, structure of model $E_{\theta}$, entropy weight $\beta$, constant $c_1$ represents the size of $z_i$, constant $c_2$ is set to 20 for all experiment}
  \For{sampled minibatch $\left\{x_k\right\}_{k=1}^N$}
  {
        \For{\textbf{all}$ k \in \left\{ 1,....,N\right\}$}
        {
            randomly draw two augmentation functions $t, t'\sim \mathcal{T}$\\
            \# Augmentation\\
            $\widetilde{x}_{2k-1} = t(x_k)$\\
            $\widetilde{x}_{2k} = t'(x_k)$\\
            \# Extract features\\
            $z_{2k-1} = E_\theta(\widetilde{x}_{2k-1})$\\
            $z_{2k} = E_\theta(\widetilde{x}_{2k})$\\
        }
        \For{ \textbf{all} $i \in \left\{ 1,....,2N\right\} and\; j \in \left\{ 1,....,2N\right\} $}
        {   
            \# Similarity\\
            $s_{i,j} = sim(z_i,z_j)=z^\top_i \cdot z_j$\\
            $s'_{i,j} = c_2\cdot tanh\left(\frac{s_{i,j}}{c_1\cdot c_2} \right)$ \\
        }
        
        \textbf{define} $S = \left\{ s'_{i,j} ,i,j \in {2N} \right\},\; s_{max} = \max\left\{S\right\}$
        
        $S_{shift} = S-s_{max}$\\
        \textbf{define} $S_{shift} = \left\{ \hat{s}_{i,j} ,i,j \in {2N} \right\}$
        
        \textbf{define} $\ell(i,j) =-\log \frac{exp\left(\hat{s}_{i,j}\right)}{ \sum_{k=1}^{2N}\mathbb{I}_{[k\not=i]} exp(\hat{s}_{i,k})} $\\
        $\mathcal{L}_{nce}=\frac{1}{2N}\sum_{k=1}^N\left[ \ell(2k-1,2k)+\ell(2k,2k-1)\right]$\\
        $\mathcal{L}_{entropy}=\frac{1}{2N}\sum_{k=1}^{2N} \|z_{k}\|_q$\\
        $\mathcal{L}=\mathcal{L}_{nce} + \beta \cdot \mathcal{L}_{entropy}$\\
        update network $E_{\theta}$
    }
    return encoder network $E_\theta$ 
\end{algorithm}
%\end{comment}

To minimize the entropy, we seek the lower bound for the negative entropy as:
 \begin{align}
 \label{eq:obj_radius}
 - H(\mathbf{z}) &= \mathbb{E}_{p(\mathbf{x}, \mathbf{z})} [\log  p(\mathbf{z})] \nonumber \\
 &= \mathbb{E}_{p(\mathbf{x}, \mathbf{z})} [\log  p(\mathbf{z}) - \log  r(\mathbf{z}) ] + \mathbb{E}_{p(\mathbf{x}, \mathbf{z})} [\log  r(\mathbf{z})] \nonumber \\
 &= \operatorname{KL} [ p_n(\mathbf{z}) || r(\mathbf{z}) ] + \mathbb{E}_{p(\mathbf{x}, \mathbf{z})} [\log  r(\mathbf{z})] \nonumber  \\
 &\geqslant  \mathbb{E}_{p(\mathbf{x}, \mathbf{z})} [\log  r(\mathbf{z})], 
 \end{align}
where $r(\mathbf{z})$ is a reference distribution and $\operatorname{KL} [ p_n(\mathbf{z}) || r(\mathbf{z})]\geqslant0$. One common choice is to let $r(\mathbf{z}) = \mathcal{N}(\mathbf{0}, I)$. Then, this lower bound is proportional to the L2 norm:
\begin{equation}\label{eq:obj_activate}
\mathbb{E}_{p(\mathbf{x}, \mathbf{z})} [\log r(\mathbf{z}) ] \propto - \frac{1}{N} \sum_{i=1}^N \| z_i \|_2^2,
\end{equation}
where $\| \cdot \|_2$ denotes the Frobenius norm when $p=2$ and $z_i = \emph{E}_\theta(x_i)$ denotes the latent feature of the $i^{th}$ normal sample. Eq.~\ref{eq:obj_activate} also provides us with a geometrical interpretation in the latent space. In this view, we can also generalize the Frobenius case into other orders, such as $p=1$: Another choice is thus to let $r(\mathbf{z})$ as a standard \textit{Laplace} distribution ($r(\mathbf{z}) = \operatorname{L}(\mathbf{0},I)$), similarly, this yields the mean of L1 norm:
\begin{equation}\label{eq:obj_activate_1}
\mathbb{E}_{p(\mathbf{x}, \mathbf{z})} [\log r(\mathbf{z}) ] \propto - \frac{1}{N} \sum_{i=1}^N \| z_i \|_1.
\end{equation}
Based on Eq. \ref{eq:LB_KL_reformulate_2_final}, with Monte Carlo samples, our loss function is defined as:
\begin{align}
\begin{split}\label{eq:loss_prime}
\mathcal{L} &= - I_{\operatorname{NCE}}(\mathbf{x}, \mathbf{z}) - \beta \cdot \mathbb{E}_{p(\mathbf{x}, \mathbf{z})} [\log  r(\mathbf{z})]  \\
&\approx \frac{1}{N} \sum_{i = 1}^N \left[ - \log \frac{e^{f\left(x_i, z_i\right)}}{\frac{1}{K} \sum_{j=1}^{K} e^{f\left(x_i, z_j\right)}} + \beta \cdot \| z_i \|_p \right],\\
\end{split}
\end{align}
where $p=1$ or $p=2$ according to the choice of $r$. We will explain the difference of this choice in the experiment part.

In this paper, we apply $f(x_i,z_i)= sim(E_\theta(x_i), E_\theta(\widetilde{x}_i)) = sim(z_i, \widetilde{z}_i)$, where $z_i$ and $\widetilde{z}_i$ are the latent representations of $x_i$ and its augmented view $\widetilde{x}_i$; $sim(u,v)\equiv u^\top \cdot v $ denotes the similarity between two normalized vectors u and v.
The loss function can be reformulated as:
\begin{align}
\begin{split}\label{eq:loss}
\mathcal{L} 
&=\frac{1}{K} \sum_{i = 1}^K \left[ - \log \frac{exp\left(sim\left(z_i, \widetilde{z}_i\right)\right)}{\frac{1}{K} \sum_{j=1}^{K} exp(sim(z_i, \widetilde{z}_j))} + \beta\| z_i \|_p \right],
\end{split}
\end{align}

Note that in practice the NCE lower bound requires larger number of negative samples to ensure the good performance \cite{SimCLR,poole2019variational}. Considering the efficiency of negative sampling in anomaly detection, we follow SimCLR\cite{SimCLR} and AMDIM\cite{AMDIM} for the augmentation strategy.
The pseudocode of the training process for the base model is shown in Algorithm \ref{alg::Training Pseudocode base model}.

Local Deep Infomax (local DIM)~\cite{DIM}, which maximizes the mutual information between a global feature vector depend on the full input and a collection of local feature vectors pulled from an intermediate layer in the encoder, has shown to be greatly effective in improving feature learning and maximizing mutual information in~\cite{DIM,AMDIM}. 
To get the best performance, we introduce local DIM to our method as the extension model. With the augmentation strategy from \cite{SimCLR,AMDIM}, the negative samples size is increased and $L_{NCE}\left(z_i,\widetilde{z}_i \right)$ becomes:
\begin{align}
\begin{split}\label{eq:L_NCE}
\mathcal{L} _{NCE}\left(z_i,\widetilde{z}_i \right)
&= - \log \frac{exp\left(sim\left(z_i, \widetilde{z}_i\right)\right)}{\frac{1}{2K} \sum_{j=1}^{2K}\mathbb{I}_{[j\not=i]} exp(sim(z_i, \widetilde{z}_j))}.
\end{split}
\end{align}

Thus the loss function, added local DIM, is formulated as:
\begin{align}
\begin{split}\label{eq:loss_final}
&\mathcal{L}=\frac{1}{K} \sum_{i = 1}^K \left[\mathcal{L}_{NCE}\left(z_{g_i},\widetilde{z}_{g_i} \right) + \mathcal{L}_{NCE}\left(z_{g_i},\widetilde{z}_{l_i} \right)+\beta \cdot \|z_{g_i}\|_p \right],
\end{split}
\end{align}
where $z_{g_i}$ refers to the global features, produced by parametric function $E_\theta$, $z_{l_i}$ refers to the local features, produced by an intermediate layer in $E_\theta$.

\subsection{Normal Score Measurement}
Most surrogate supervision based approaches associate the anomaly score measurement with the loss function of surrogate task, which  assume that anomalous data will result in relatively bigger loss. Following this mechanism, since the similarity between $\mathbf{z}_i$ and $\widetilde{\mathbf{z}}_i$ is maximized in our method, a straight forward normal score measurement can be formulated as:
\begin{align}
\begin{split}\label{eq:anomaly score}
{\operatorname{NormalScore}}=sim(z_i, \widetilde{z}_i),
\end{split}
\end{align}
during $z_i$ and $\widetilde{z}_i$ are the latent representations of the augmented view $x_i$ and $\widetilde{x}_i$ of the same test data example $x_{ori_i}$, in which augmented view is randomly selected. In this design, sample with high normal score is consider to be normal. However, it is time consuming to traverse all combinations of augmented view pair and random selected one combination yield in unstable result. As a solution, when testing we skip the augmentation terms and encode the original data $\mathbf{x_{ori_i}}$ directly. The corresponding output features is noted as $\mathbf{z_{ori_i}}$. Thus we reformulated the normal score for base model as:
\begin{align}
\begin{split}\label{eq:anomaly score final}
&{\operatorname{NormalScore}}=sim(z_{ori_i},z_{ori_i}).
\end{split}
\end{align}
The normal score for extension model is then reformulated as:
\begin{align}
\begin{split}\label{eq:anomaly score final extention}
&{\operatorname{NormalScore}}=sim(z_{g_i},z_{g_i})+sim(z_{g_i},z_{l_i}).
\end{split}
\end{align}

\subsection{Relation to Other Algorithms}

A successful theorem should not only be able to explain the implicit mechanisms in previous works but also be able to point out the potential improvement direction. We then take some classic methods to illustrate the interpretability of work.

\textbf{AutoEncoder~\cite{vincent2008extracting}:} 
Reconstruction based anomaly detection using Auto-encoder is a mainstream method. It assumes that by minimizing the reconstruction error, normal and anomalous samples could lead to significantly different embedding and thus the corresponding reconstruction errors can be leveraged to differentiate the two types of samples. Then we will see how our equation fits this work.
First we reformulate the mutual information terms as:
\begin{align}
\begin{split}\label{eq:MI_AE}
&I_n\left(\xv,\zv\right) = H_n(\xv) - H_n(\xv|\zv)\\
=& H_n(\xv) + \mathbb{E}_{\mathbf{x}\sim p_n(\mathbf{x})}\mathbb{E}_{\mathbf{z}\sim p_n(\mathbf{z}|\mathbf{x})}\left[ \log{p_n(\mathbf{x}|\mathbf{z})} \right]. 
\end{split}
\end{align}

With Eq.~\ref{eq:MI_AE}, the Eq.~\ref{eq:LB_KL_reformulate_2} is reformulated as:
\begin{align}
\begin{split}\label{eq:AE_loss}
&\max \limits_{\theta}~\{\emph{I}_n\left(\mathbf{x},\mathbf{z}\right) - H_n(\mathbf{z})\}\\
=& \max \limits_{\theta}~\{H_n(\xv) + \mathbb{E}_{\mathbf{x}\sim p_n(\mathbf{x})}\mathbb{E}_{\mathbf{z}\sim p_n(\mathbf{z}|\mathbf{x})}\left[ \log{p_n(\mathbf{x}|\mathbf{z})} \right]-H_n(\zv)\}, 
\end{split}
\end{align}
where the first term is a constant, the second term is the reconstruction likelihood, the third term is the entropy of $\mathcal{Z}$. This gives solid mathematical support to why anomaly detection can be benefited via maximizing reconstruction likelihood. Furthermore, it is indicated that adding entropy terms may further improve the method.

\textbf{SVDD\cite{SVDD}:} 
Deep one-class classification is the most classic method. Pre-trained by autoencoder network with reconstruction error, it then minimizes the volume of a data-enclosing hyper-sphere in latent space. It assumes that the normal examples of the data are closely mapped to center c, whereas anomalous examples are mapped further away from the center. SAD\cite{SAD} mathematically proved that the objective function in SVDD minimizes an upper bound on the entropy of latent space. Considering the pre-trained autoencoding objective implicitly maximizes the mutual information, SAD intuitively summarizes its objective function to have a positive correlation with mutual information and negative correlation with entropy, 

which is consistent with our objective function in Eq.~\ref{eq:LB_KL_reformulate_2}. our method gives theoretical support to why anomaly detection can benefit from this objective function. Both SAD and SVDD used a two-stage optimization method, which maximizes the mutual information first and then minimizes the entropy. 

This two-stage optimization can not guarantee the simultaneous optimization of the mutual information and the entropy. Moreover, to relief the hyper-sphere collapse, only networks without bias terms and bounded activation functions can be used~\cite{SVDD}.

\textbf{Information bottleneck \cite{tishby2000information,tishby2015deep}:}
The Information Bottleneck (IB) methods define a good representation and learn it with the trade-off of  a concise representation and  a powerful prediction in downstream tasks \cite{tishby2000information,tishby2015deep}. Various studies, extend the IB methods into deep learning scenario with variational methods, such as Variational Information Bottleneck (VIB) \cite{VIB}, Information Confidence Penalty (ICP) \cite{ICP}, Variational Discriminator Bottleneck (VDB) \cite{VDB}, \textit{etc.} Our regularizer shown in Eq.~\ref{eq:LB_KL_reformulate_2} is consistent with the one proposed in \cite{ICP}. We also show in Eq.~\ref{eq:obj_radius} that the entropy is proportional to a KL divergence in terms of a reference distribution, which is consistent with the regularizer used in \cite{VIB,VDB}.

\textbf{Basic assumption for surrogate task based approaches in anomaly detection:} As anomalous data is inaccessible during train process, most surrogate tasks based unsupervised anomaly detection methods are based on an assumption that data can be embedded into a lower-dimensional subspace where normal and anomalous samples appear significantly different~\cite{chandola2009anomaly}. In our method, we are the first to reveal the theoretical rationality for this basic assumption. From Proposition \ref{prop:obj-lb}, we can see the key to reformulate the objective function from semi-supervised fashion (objective function in Eq.~\ref{eq:LB_KL_reformulate}) to unsupervised fashion (objective function in Eq.~\ref{eq:LB_KL_reformulate_2}) is to ensures the non-negativity of $\mathbb{E}_{p_n(\mathbf{x})}\left[ H(p_n(\mathbf{z}|\mathbf{x}),p_a(\mathbf{z}|\mathbf{x})\right]$. This is under the assumption, as introduced in Proposition \ref{prop:obj-lb}, that $p_n(\mathbf{x}, \mathbf{z})$ and $p_a(\mathbf{x}, \mathbf{z})$ has certain distance such that for most samples $\mathbf{x},\mathbf{z} \sim \emph{p}_n(\mathbf{x},\mathbf{z})$ the evaluated density  $\log \emph{p}_a(\mathbf{z}|\mathbf{x}) \leqslant 0$, which is consistant with the assumption in~\cite{chandola2009anomaly}.

\section{Experiments}
In this section, we present a comprehensive set of experiments to validate our anomaly detection algorithm under unsupervised settings, in which multiple commonly used benchmark datasets are involved. To further demonstrate the robustness of our method, following the setting of ARNet\cite{fye2020ARNet}, a subset of ImageNet~\cite{russakovsky2015imagenet} with higher resolution, richer texture and more complex background, is utilized.

\subsection{Experimental Setups}

\textbf{Datasets.}
We experiment with the following five popular image datasets. 
\begin{itemize}
    \item
    MNIST~\cite{lecun1998mnist}: consists of 70,000 $28\times28$ handwritten grayscale digit images.
    \item
    Fashion-MNIST~\cite{xiao2017fashion}: a relatively new dataset comprising $28\times28$ grayscale images of 70,000 fashion products from 10 categories, with 7,000 images per category. 
    \item CIFAR-10~\cite{krizhevsky2009learning}: consists of 60,000 $32\times32$ RGB images of 10 classes, with 6,000 images for per class. There are 50,000 training images and 10,000 test images, divided in a uniform proportion across all classes. 
    \item
    CIFAR-100~\cite{krizhevsky2009learning}: consists of 100 classes, each of which contains 600 RGB images. The 100 classes in the CIFAR-100 are grouped into 20 ``superclasses'' to make the experiment more concise and data volume of each selected ``normal class'' larger. 
    \item
    ImageNet~\cite{russakovsky2015imagenet}: Following ARNet~\cite{fye2020ARNet}, we group the data from the ILSVRC 2012 classification dataset~\cite{russakovsky2015imagenet} into 10 superclasses by merging similar category labels using Latent Dirichlet Allocation (LDA)~\cite{blei2003latent}, a natural language processing method (see appendix for more details). We noted this dataset is more challenging due to its higher resolution richer contexture and more complex background. 
\end{itemize}
For all datasets, the training and test partitions remain as default. In addition, pixel values of all images are normalized to $[-1, 1]$.

\textbf{Evaluation protocols.} 
For a dataset with $C$ classes, we conduct a batch of $C$ experiments respectively with each of the $C$ classes set as the ``normal'' class once. We then evaluate performance on an independent test set, which contains samples from all classes, including normal and anomalous data. As all classes have equal volumes of samples in our selected datasets, the overall number proportion of normal and anomalous samples is $1:C-1$. The model performance is then quantified using the area under the Receiver Operating Characteristic (ROC) curve metric (AUROC). It is commonly adopted as performance measurement in anomaly detection tasks and eliminates the subjective decision of threshold value to divide the ``normal'' samples from the anomalous ones. The above evaluation protocols are generally accepted among most recent works on anomaly detection \cite{kingma2013auto, zhai2016deep, zong2018deep,schlegl2017unsupervised,Akcay2018,OCGAN,golan2018deep,deecke2018anomaly,fye2020ARNet}.

\textbf{Model configuration.} 

For all the datasets, the training epoch is set as 400, the learning rate is set as 2e-4. 
The corresponding pseudocode of base model can be found in Algorithm~\ref{alg::Training Pseudocode base model}.

For all experiments, if not specified, the base model refers to the model with loss function in Eq.~\ref{eq:LB_KL_reformulate_2_final}, normal scoring function in Eq.~\ref{eq:anomaly score final}, pseudocode in Algorithm \ref{alg::Training Pseudocode base model}. The extensive model refers to the model with loss function in Eq.~\ref{eq:loss_final}, normal scoring function in Eq.~\ref{eq:anomaly score final extention}. In general, we use a base model to investigate the property of the objective function and use an extensive model to reach better performance.

\subsection{Investigation of Objective Function Properties}\label{sec:Property}
In this section, we look into the properties of our objective function with our base model. 
The objective function in Eq.~\ref{eq:LB_KL_reformulate_2_final} provides a lower bound for the maximization of the KL divergence in Eq.~\ref{eq:KL obj}, which present a trade-off between the maximization of the mutual information $I(\xv,\zv)$ and the constrain of the entropy $H(\zv)$.

\subsubsection{\textbf{Hyper parameter $\beta$ in adjusting the trade-off}}\label{sec:balance}
To investigate the trade-off between the mutual information and the entropy, several experiments are conducted with a set of $\beta$ , noted as $\mathcal{B}=\{0,0.5,1,10,20,30,40,50,60\}$, on datasets CIFAR10 and CIFAR100. We report the average AUROC results of all classes with respect to $\beta$ ( Fig.~\ref{img:balence}).
We observe that the model yield a comparable good performance with $\beta$ in a relatively wide range from $0.5$ to $40.0$ in CIFAR10 and from $1$ to $30.0$ in CIFAR100, where the best performance is reached when $\beta = 20.0$ on both datasets, suggesting that the model is not very sensitive to the change in $\beta$ as long as it in a certain range, which is vital in real-world cases where we can not utilize a testing dataset to select the super-parameter. In contrast, the model results in poor performance when $\beta$ is either too small or too large.

In this work, if not specified, we choose $\beta=20.0$ to conduct experiments on all datasets. 

We also record the AUROC result every 5 epochs during the training process on CIFAR10 with different $\beta$ in $\mathcal{B}$. The result of class 1 (car) is presented in Fig.~\ref{img:Entropy_WITH_OR_WITHOUT}. It is observed that the AUROC curve shows better convergence with $\beta$ in the range from $0.5$ to $30.0$, where the model from the last epoch reaches almost the best performance. This is a vital property in real-world cases, where we can not utilize testing results to choose the model with the best performance. 

\begin{figure}[t]
\centering
\includegraphics[width=\columnwidth]{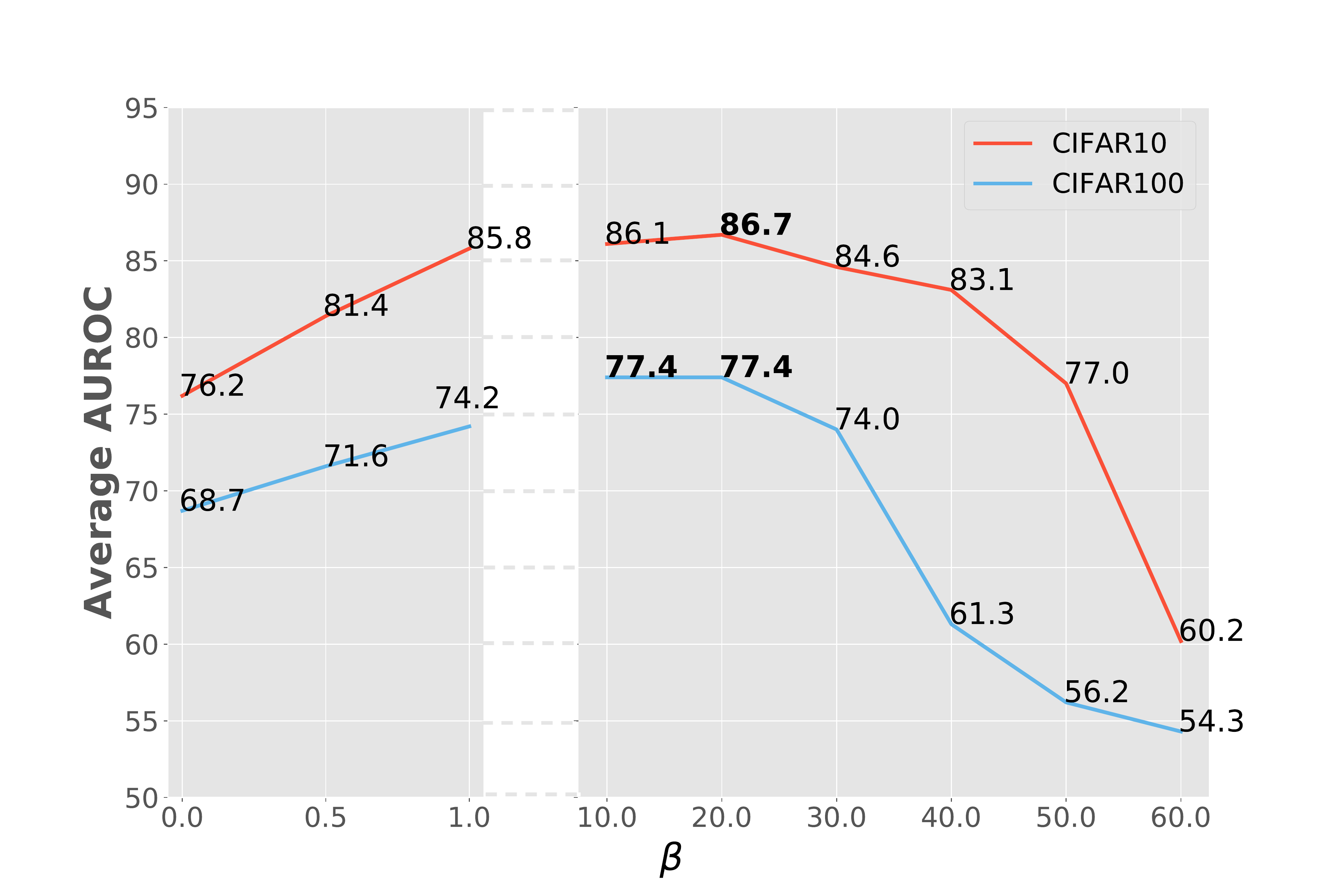}
\caption{Average AUROC \textit{w.r.t.} $\beta$ on CIFAR10 and CIFAR100. }
\label{img:balence}
\end{figure}

\begin{figure}[t]
\centering
\includegraphics[width=\columnwidth]{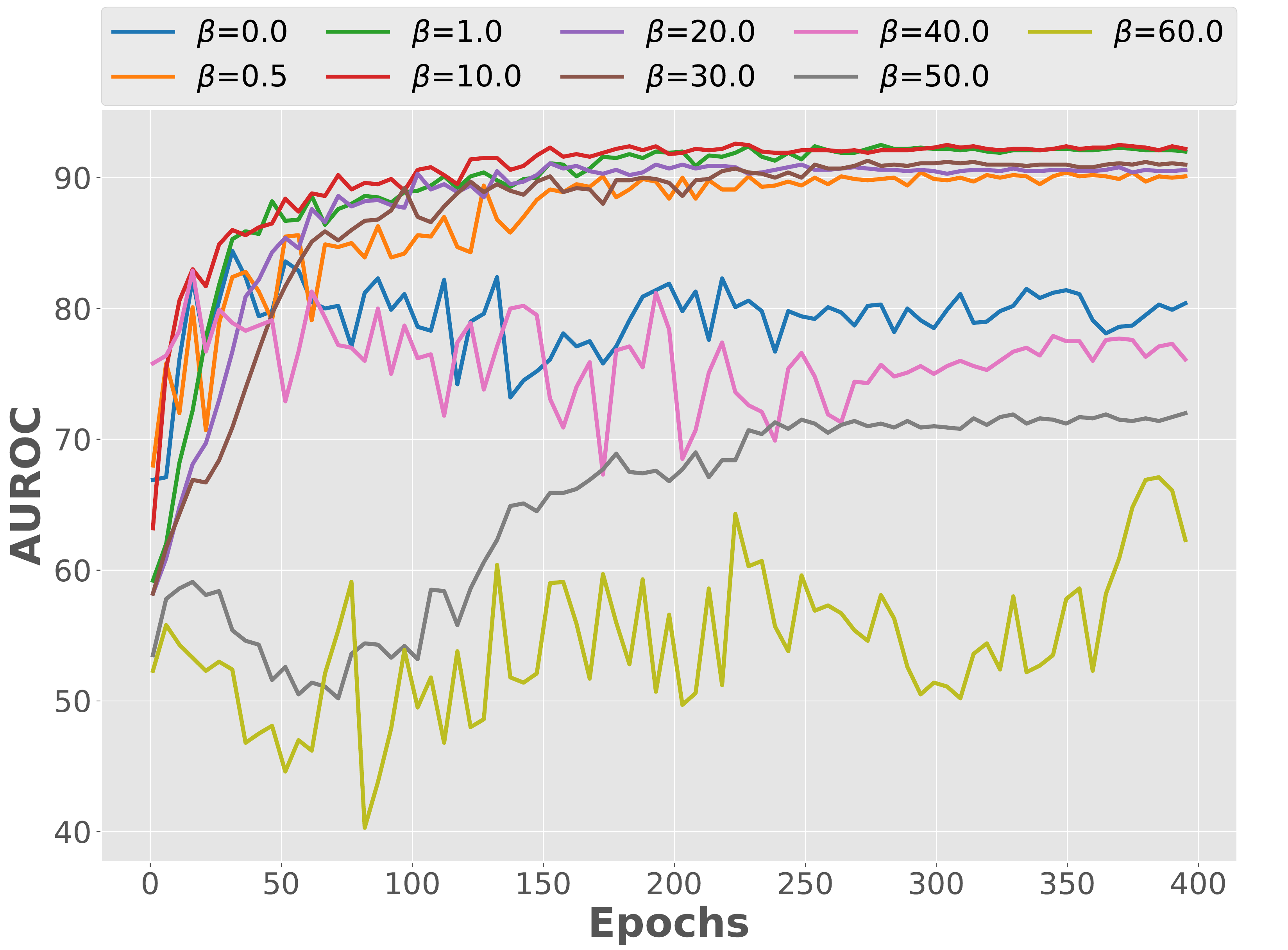}
\caption{The testing AUROC \textit{w.r.t.} $\beta$ during training. }
\label{img:Entropy_WITH_OR_WITHOUT}
\end{figure}

To be noted, as observed from the above two experiments, when $\beta=0.0$ the model can not converge properly, which indicates that without the entropy constrain the unsupervised learning framework based on maximizing mutual information only is not directly applicable to anomaly detection.

\begin{figure*}[!ht]
\centering
\begin{minipage}[t]{0.49\textwidth}
\centering
\includegraphics[width=\textwidth]{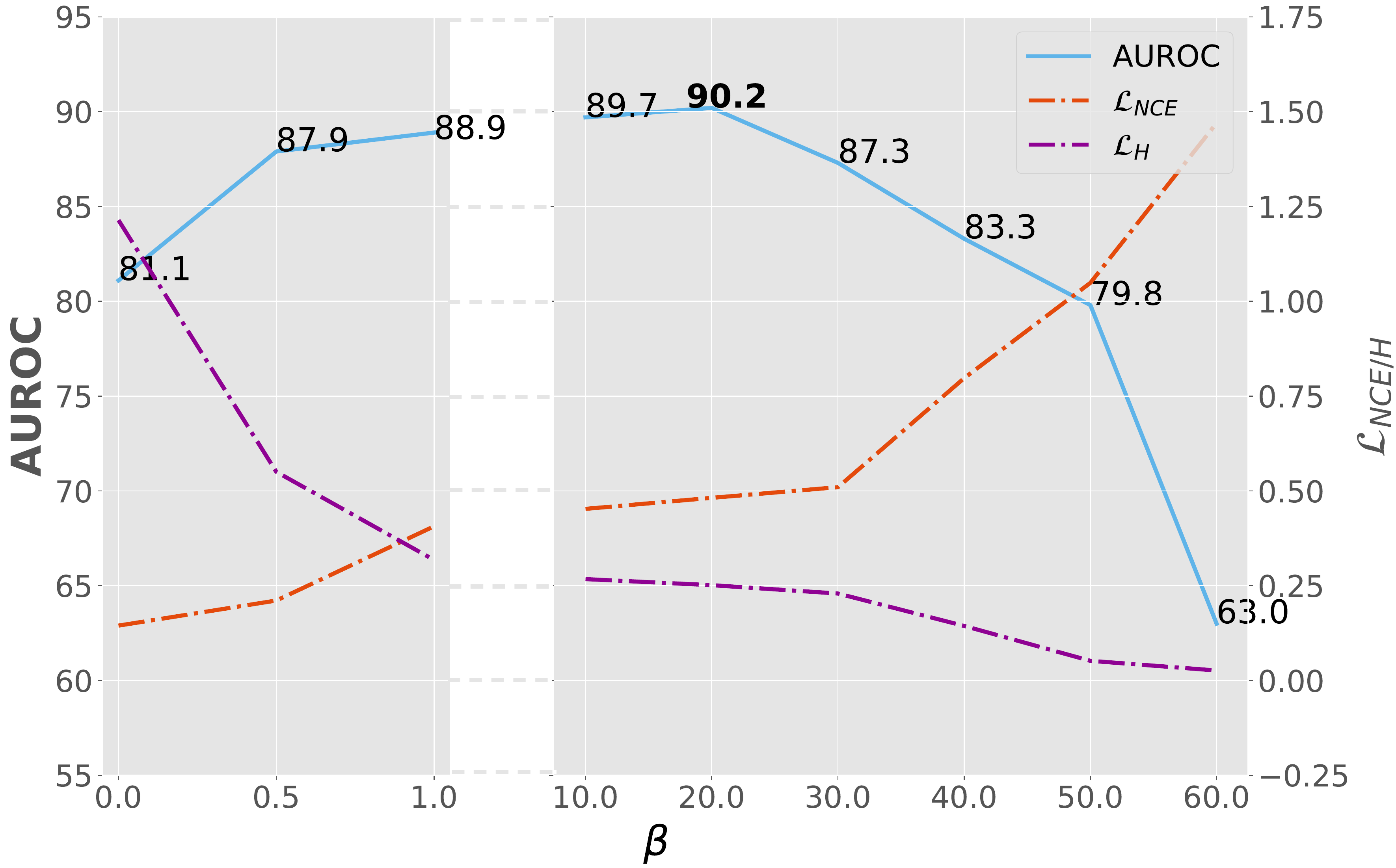}
\end{minipage}
\hfill
\begin{minipage}[t]{0.47\textwidth}
\centering
\includegraphics[width=\textwidth]{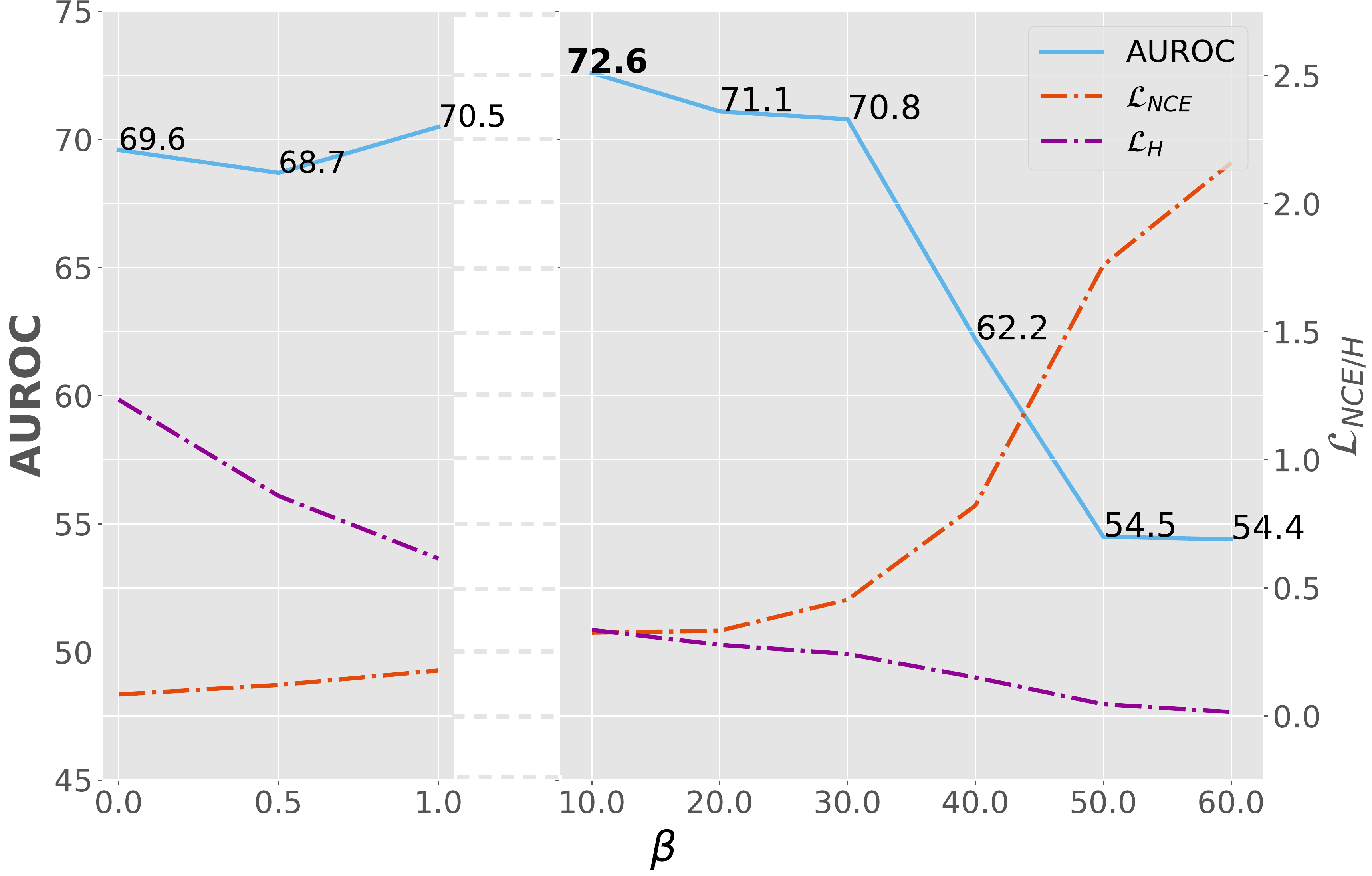}
\end{minipage}
\begin{minipage}[t]{0.49\textwidth}
\centering
\includegraphics[width=\textwidth]{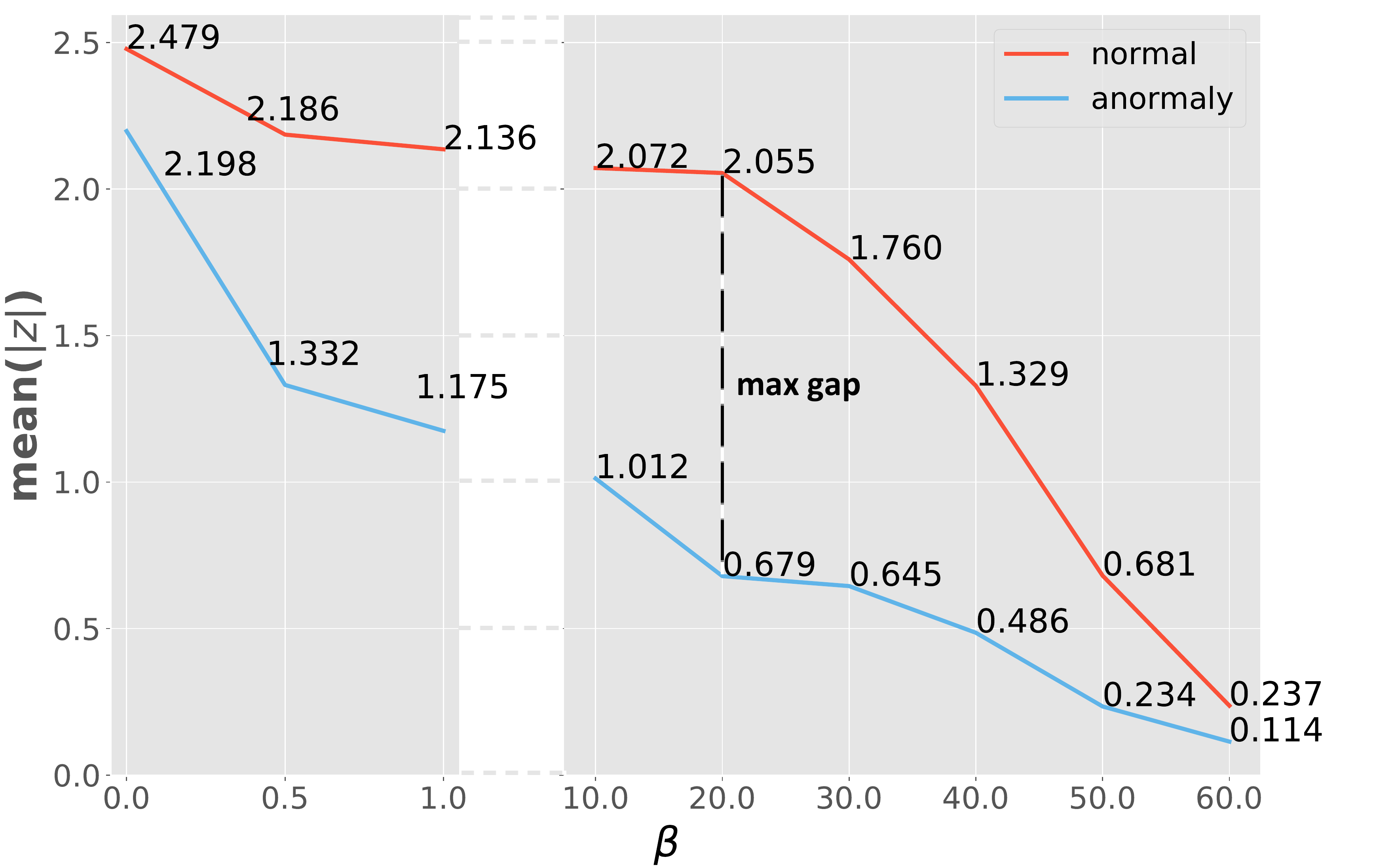}
\caption*{(a) CIFAR10 (normal class: 0)}
\end{minipage}
\hfill
\begin{minipage}[t]{0.48\textwidth}
\centering
\includegraphics[width=\textwidth]{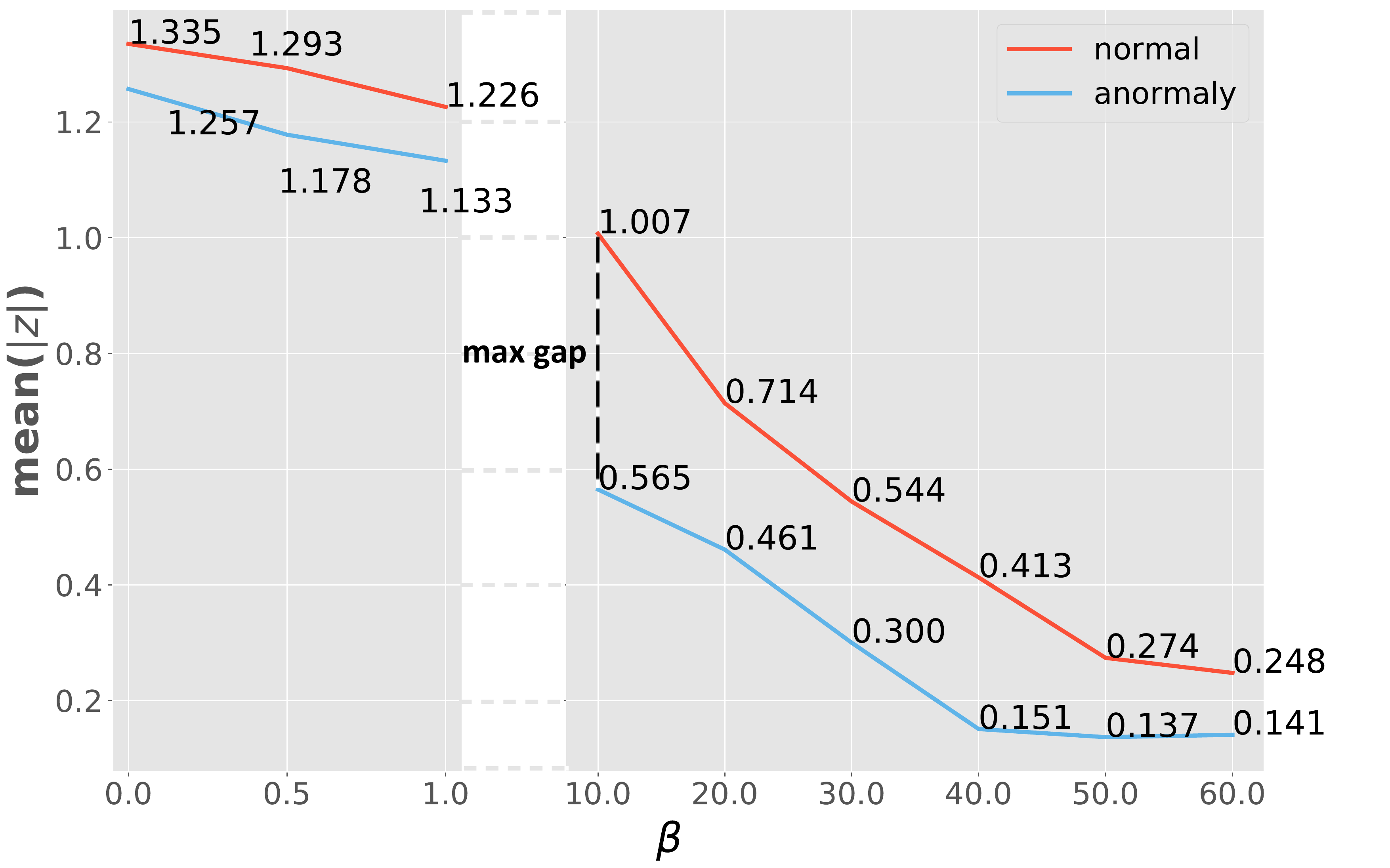}
\caption*{(b) CIFAR100 (normal class: 0)}
\end{minipage}
\caption{The trade-off between MI and entropy. (\textbf{Top}) The AUROC curve (solid line in blue), the loss curve of NCE (dashed dot line in red) and the loss curve of entropy (dashed dot line in purple) \textit{w.r.t.} $\beta$. (\textbf{Bottom}) The mean value of $\|z\|$ \textit{w.r.t.} $\beta$ that corresponds to normal and anomalous data. (\textit{Best viewed in color}) }\label{img:gap}
\end{figure*}

\subsubsection{\textbf{Digging out the essence of the trade-off}}\label{sec:trade-off}

We experiment on CIFAR10 and CIFAR100 with our base model under the hyper-parameter $\beta$ in $\mathcal{B}$. 
For each training class, we investigate the relation between the converged training loss (mutual information and entropy) and the corresponding AUROC. Then we look into the learned representation $z_i$ by calculating their distance from the center $\| z_i \|_2$ of normal and anomalous data as $\beta$ increase. We report the results that trained with class 0 in Fig.~\ref{img:gap}.
In the top panels of Fig.~\ref{img:gap}, we can observe that as $\beta$ increases, the loss corresponds to the mutual information converges to a larger value while the loss corresponds to the entropy converges to a smaller value. As $\beta$ increases, the model tends to ignore the mutual information, which matches our observation that the  MI loss is getting larger and the entropy loss is getting smaller. It is remarkable that $\mathcal{L}_{NCE}$ and $\mathcal{L}_{H}$ become in similar scale when $\beta$ is in the range from $0.5$ to $40.0$ in CIFAR10 and from $10.0$ to $30.0$ in CIFAR100, in which model results in better performance as shown in the top panels in Fig.~\ref{img:gap}. 
This indicates that a proper $\beta$ for the trade-off should lead to similar converged scale for both $\mathcal{L}_{NCE}$ and $\mathcal{L}_{H}$.

In the bottom panels of Fig.~\ref{img:gap}, we compare the mean $\| z_i \|_2$ of all samples between normal and anomalous data in testing datasets.  As $\beta$ increases, the mean $\| z_i \|_2$ for both normal and anomalous data shows a declining tendency. More importantly, the $\| z_i \|_2$ for the anomalous data decreases to a larger extent than that of the normal data. This results in a gap between normal and anomaly data.
We observe that the model results in better performance when the mean $\| z_i \|_2$ gap is larger, with $\beta$ in the range of $0.5$ to $40.0$ in CIFAR10 and $10.0$ to $30.0$ in CIFAR100. Especially, the biggest gap is attained when $\beta = 20.0$ on CIFAR10 and $\beta=10.0$ on CIFAR100 dataset respectively, which correspond to their best performance as shown in the top panels of Fig.~\ref{img:gap}. 

To analyze these experiments, we connect these results with the formulation. As shown in Eq.~\ref{eq:obj_radius}, the entropy regularizer is lower-bounded by the expected value of $\log r(\zv)$. As we choose $r$ as a zero-centered reference distribution, whose density function is proportional to the p-norm $\| z_i \|_p \ (p=1 \ or \ p=2)$, a geometric interpretation is also guaranteed: the model will regularize the Euclidian (\textit{resp.} Manhattan) distance from the center $\| z_i \|_2$ and encourage the representations $z_i$ to be centered. From Fig.~\ref{img:gap}, as increase the value of $\beta$, we can observe that for anomaly data the mean $\| z_i \|_2$ is getting close to zero rapidly, while for normal data which profits from the mutual information maximization the mean $\| z_i \|_2$ is getting close to zero relatively slower. When $\beta$ is too large (\textit{e.g.} $\beta \geqslant 60.0$), we can observe that the maximization of MI is over-regularized, thus the model cannot extract effective features for normal data, which results in a lower AUROC. These results enlighten the importance of the trade-off between mutual information and entropy.

\subsection{Ablation: Mutual Information and Entropy Estimators}
The previous section discusses the importance of the information trade-off. A nice property of our formulation is that Eq.~\ref{eq:LB_KL_reformulate_2_final} is general enough and is able to plug in alternative estimators for the mutual information and entropy loss. In this section, we conduct the ablation experiments with the alternative mutual information and entropy estimator and justify the expressiveness of the estimators that we choose.

\subsubsection{\textbf{Learning with a different mutual information estimator}}\label{sec:whatdoHdo}
We apply the InfoNCE estimator \cite{gutmann2010noise} for the calculation of the mutual information. Considering there are a wide range of approaches for the MI estimation, we also consider an alternative lower bound for MI estimation called Donsker-Varadahn (DV) bound \cite{donsker1975large}. 
\begin{equation}I(\mathbf{x} ; \mathbf{z}) \geqslant \mathbb{E}_{p(\xv, \zv)}[f(\xv, \zv)]-\log \mathbb{E}_{p(\zv)}[e^{f(\xv,\zv)}] \triangleq I_{\mathrm{DV}}
\end{equation}
This bound is widely used in various algorithms such as Mutual Information Neural Estimator (MINE) \cite{MINE}. Deep InfoMax (DIM) \cite{DIM} continues to extend the critic in this bound with a Jensen-Shannon based formulation as:
\begin{equation}I(\mathbf{x} ; \mathbf{z}) \geqslant \mathbb{E}\left[-\sigma\left(-f\left(\xv, \zv\right)\right) -\sigma\left(f\left(\xv^{\prime}, \zv\right)\right)\right] \triangleq I_{\mathrm{JSD}}
\end{equation}
where $\sigma(z) = \log \left(1+e^{z}\right)$ denotes the Softplus function, and note that for DV bound and JSD bound the critic function $f$ is usually modeled with a discriminator function $f: \mathcal{X}\times \mathcal{Z} \rightarrow \mathbb{R}$. Moreover, according to existing studies like \cite{DIM}, the JSD-based estimator and the DV-based estimator behave similarly. Here, we apply the JSD-based MI estimator to represent the effect of DV bound.

To compare the JSD-based and NCE-based MI estimator, on our base model, we utilize the JSD-based MI estimator and the InforNCE estimator. The experiments are conducted on CIFAR10 and CIFAR100 and the hyper-parameter is set as $\beta=20.0$. The results are shown in Table~\ref{tal:Different Optimization Method}. Given the same entropy estimator, we can find the model with the JSD-based MI estimator underperforms than the NCE-based MI estimator. 

To investigate the performance gap, we investigate the training process and illustrate the testing AUROC curve and the mutual information loss during the training process in Fig.~\ref{img:JSD_NCE}, where the NCE-based and the JSD-based estimator is marked in red and blue respectively. Compared to the NCE-based estimator, both testing AUROC and the loss by JSD-based estimator show large perturbation during the training, which makes the model more difficult to converge. 

One plausible explanation is that the mutual information estimation variance of the JSD estimator is much larger than the NCE estimator, which is the critical difference between these two MI estimators \cite{poole2019variational}. This large variance caused by the JSD-based estimator makes the model more unstable during the training and the learned representations may be useless. In this way, the JSD-based methods need more fine-tuning to stabilize the training process of the model.

\begin{table}[!t]
    \centering
    \fontsize{10}{12}\selectfont    %{字体尺寸}{行距}
    \caption{Comparison of different MI and entropy estimators.}
    \label{tal:Different Optimization Method}
    \begin{tabular}{ccccc}
    \toprule
    \toprule
    \multirow{2}{*}{Loss function}&
    \multicolumn{2}{c}{CIFAR10}&\multicolumn{2}{c}{CIFAR100}\cr
    \cmidrule(lr){2-3}\cmidrule(lr){4-5}
    &AVG&STD&AVG&STD\cr
    \cmidrule(lr){1-5}
     $\mathcal{L}_{NCE} + \beta \cdot \mathcal{L}_{2\;Entropy}$ & 85.5 & 7.02 & 76.9 & 5.26\cr
     $\mathcal{L}_{NCE} + \beta \cdot \mathcal{L}_{1\;Entropy}$ & 86.7 & 5.47 & 77.4 & 4.62\cr
     $\mathcal{L}_{JSD} + \beta \cdot \mathcal{L}_{2\;Entropy}$ & 72.5 & 7.20 & 69.6 & 4.73\cr
     $\mathcal{L}_{JSD} + \beta \cdot \mathcal{L}_{1\;Entropy}$ & 73.9 & 5.00 & 72.7 & 3.21\cr
    \bottomrule
    \bottomrule
    \end{tabular}\vspace{0cm}
\end{table}

\begin{table}[!t]
    \centering
    \fontsize{10}{12}\selectfont    %{字体尺寸}{行距}
    \caption{Efficient normal scoring mechanism.}
    \label{tal:Efficient normal scoring mechanism}
    \begin{tabular}{ccccc}
    \toprule
    \toprule
    \multirow{2}{*}{Normal scoring function}&
    \multicolumn{2}{c}{CIFAR10}&\multicolumn{2}{c}{CIFAR100}\cr
    \cmidrule(lr){2-3}\cmidrule(lr){4-5}
    &AVG&STD&AVG&STD\cr
    \cmidrule(lr){1-5}
     $NormalScore_{rand}$ & 86.4 & 1.04 & 75.6 & 1.26\cr
     $NormalScore_{mc}$ & 86.9 & 0.05 & 78.5 & 0.12\cr
     $NormalScore_{ori}$ & 86.7 & $-$ & 77.4 & $-$\cr
    \bottomrule
    \bottomrule
    \end{tabular}\vspace{0cm}
\end{table}

\begin{figure}[t]
\centering
\includegraphics[width=8cm]{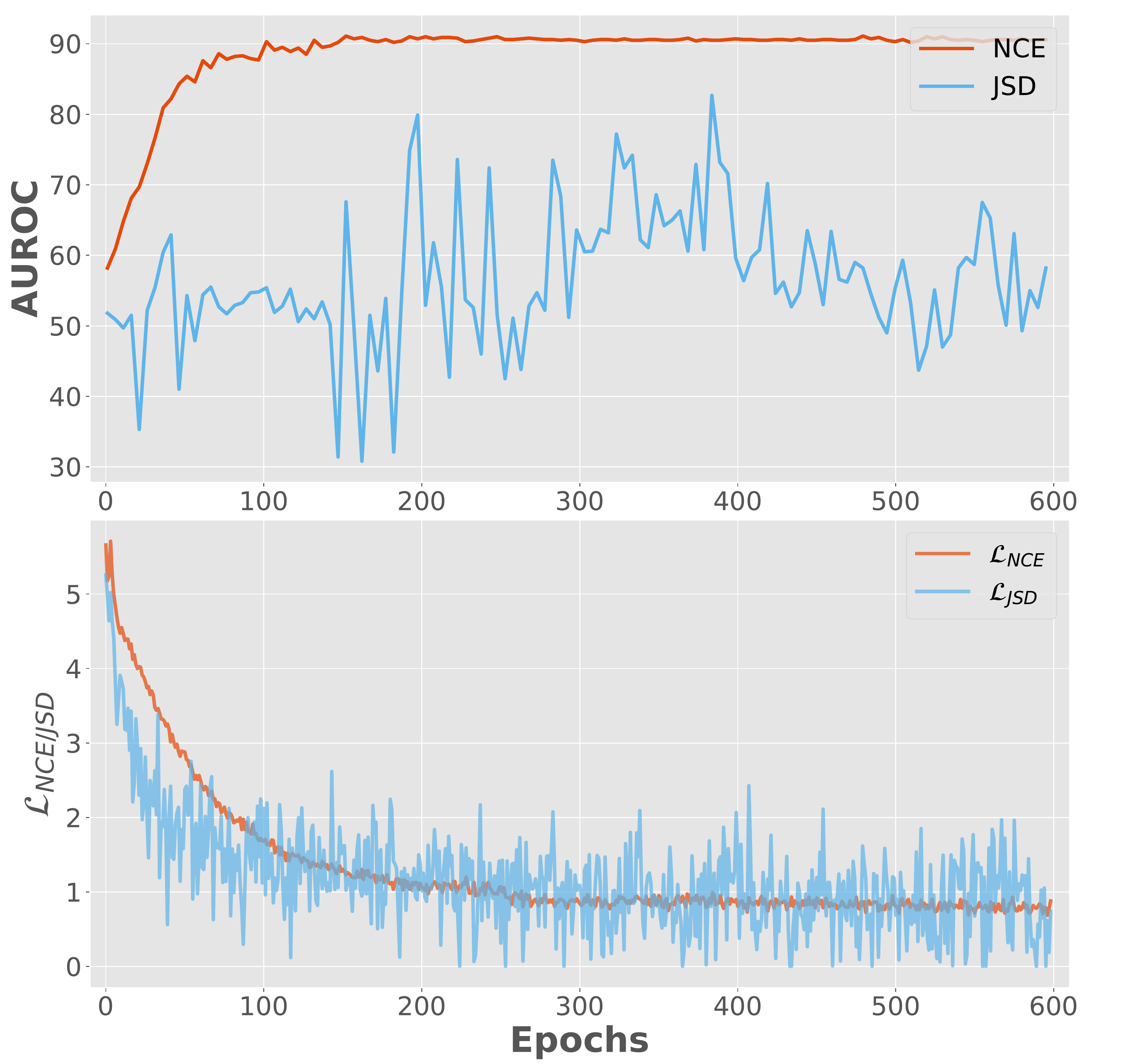}
\caption{The comparison of NCE-based and JSD-based mutual information estimator: conducted on dataset CIFAR10 where class 1 (car) is considered as normal data. (\textbf{Top}) The testing ROC curve during the training process by using NCE and JSD estimator for mutual information calculation. (\textbf{Bottom}) The estimated loss by using the NCE-based and JSD-based mutual information estimator during the training process.}
\label{img:JSD_NCE}
\end{figure}

\subsubsection{\textbf{Learning with a different entropy estimator}}\label{sec:whichloss}

We introduce two reference distributions in Eq.~\ref{eq:obj_activate} and Eq.~\ref{eq:obj_activate_1} to estimate the entropy, corresponding to the L2 norm and the L1 norm, respsectively. Similar to the previous experiments, CIFAR10 and CIFAR100 are used. With different entropy estimators, we test with both the NCE-based and the JSD-based mutual information estimator, and report the mean and standard deviation of the AUROC among all classes in Table~\ref{tal:Different Optimization Method}. Given the same MI estimator, the L1 norm estimator yields better performance. A case-by-case study is conducted and based on the hyper-parameter that we set, when the L1 norm used as the entropy regularizer, the representations are more inclined to shrink to the center. This results in a larger distance gap between normal and anomaly data than the L2 norm does and in order to have the same performance L2 norm may need more proper hyper-parameter tuning to balance the information trade-off.

Based on our observation, the NCE-based estimator is more suitable and compatible with the proposed entropy estimator. In the following sections, we continue to refine the NCE-based estimator and apply the L1 entropy regularizer to present the effectiveness of our proposed method.

\renewcommand \arraystretch{0.9}
\begin{table*}[!htb]
\centering
\caption{Average area under the ROC curve (AUROC) in \% of anomaly detection methods. For every dataset, each model is trained on the single class, and tested against all other classes. ``SD'' means standard deviation among classes. The best performing method is in bold. Results of Deep SVDD are borrowed from~\cite{SVDD}}. Results of VAE, AnoGAN and ADGAN are from~\cite{deecke2018anomaly}. Results of DAGMM, DSEBM and GeoTrans are from~\cite{golan2018deep}. Results of GANomaly and ARNet are from~\cite{fye2020ARNet}. 
	\small
	\begin{minipage}[t]{0.95\textwidth}
	\begin{tabular}{cx{3.0cm}x{0.6cm}x{0.6cm}x{0.6cm}x{0.6cm}x{0.6cm}x{0.6cm}x{0.6cm}x{0.6cm}x{0.6cm}x{0.6cm}x{0.6cm}x{0.8cm}}
	\toprule
	Dataset & Method & 0 & 1 & 2 & 3 & 4 & 5 & 6 & 7 & 8 & 9 & avg & std\\
	\cmidrule(lr){1-1} \cmidrule(lr){2-2} \cmidrule(lr){3-3} \cmidrule(lr){4-4} \cmidrule(lr){5-5} \cmidrule(lr){6-6} \cmidrule(lr){7-7} \cmidrule(lr){8-8} \cmidrule(lr){9-9} \cmidrule(lr){10-10} \cmidrule(lr){11-11} \cmidrule(lr){12-12} \cmidrule(lr){13-13} \cmidrule(lr){14-14}
		& VAE~\cite{kingma2013auto} 
		& 92.1 & \textbf{99.9} & 81.5 & 81.4 & 87.9 & 81.1 & 94.3 & 88.6 & 78.0 & 92.0 & 87.7 & 7.05\\
		& D-SVDD~\cite{SVDD}
		& 98.0 & 99.7 & 91.7 & 91.9 & 94.9 & 88.5 & 98.3 & 94.6 & 93.9 & 96.5 & 94.8 & 3.46\\
		& AnoGAN~\cite{schlegl2017unsupervised}
		& 99.0 & 99.8 & 88.8 & 91.3 & 94.4 & 91.2 & 92.5 & 96.4 & 88.3 & 95.8 & 93.7 & 4.00\\
		& ADGAN~\cite{deecke2018anomaly} 
		& 99.5 & \textbf{99.9} & 93.6 & 92.1 & 94.9 & 93.6 & 96.7 & 96.8 & 85.4 & 95.7 & 94.7 & 4.15\\
		MNIST& GANomaly~\cite{Akcay2018} 
		& 97.2 & 99.6 & 85.1 & 90.6 & 94.9 & 94.9 & 97.1 & 93.9 & 79.7 & 95.4 & 92.8 & 6.12\\
		& OCGAN~\cite{OCGAN} 
		& \textbf{99.8} & \textbf{99.9} & 94.2 & 96.3 & 97.5 & 98.0 & 99.1 & 98.1 & 93.9 & 98.1 & 97.5 & 2.10\\
		& GeoTrans~\cite{golan2018deep} 
		& 98.2 & 91.6 & \textbf{99.4} & 99.0 & \textbf{99.1} & \textbf{99.6} & \textbf{99.9} & 96.3 & 97.2 & \textbf{99.2} & 98.0 & 2.50\\
		& ARNet~\cite{fye2020ARNet} & 98.6 & \textbf{99.9} & 99.0 & \textbf{99.1} & 98.1 & 98.1 & 99.7 & \textbf{99.0} & 93.6 & 97.8 & \textbf{98.3} & 1.78\\
		\cmidrule(lr){2-2} \cmidrule(lr){3-3} \cmidrule(lr){4-4} \cmidrule(lr){5-5} \cmidrule(lr){6-6} \cmidrule(lr){7-7} \cmidrule(lr){8-8} \cmidrule(lr){9-9} \cmidrule(lr){10-10} \cmidrule(lr){11-11} \cmidrule(lr){12-12} \cmidrule(lr){13-13} \cmidrule(lr){14-14}
        
        &Our Base model & 91.3 & 98.6 & 90.6 & 88.6 & 88.2 & 89.1 & 91.2 & 85.6 & 87.7 & 86.0 & 89.7 & 3.70\\
		& Our Extension model & 99.5 & 99.7 & 98.8 & 98.3 & 97.7 & 96.7 & 98.7 & 97.5 & \textbf{98.6} & 97.3 & \textbf{98.3} & \textbf{0.97}\\
		        \cmidrule(lr){1-14}
		& DAGMM~\cite{zhai2016deep} 
        & 42.1 & 55.1 & 50.4 & 57.0 & 26.9 & 70.5 & 48.3 & 83.5 & 49.9 & 34.0 & 51.8 & 16.47\\
		& DSEBM~\cite{zong2018deep} 
		& 91.6 & 71.8 & 88.3 & 87.3 & 85.2 & 87.1 & 73.4 & 98.1 & 86.0 & 97.1 & 86.6 & 8.61\\
		& ADGAN~\cite{deecke2018anomaly} 
		& 89.9 & 81.9 & 87.6 & 91.2 & 86.5 & 89.6 & 74.3 & 97.2 & 89.0 & 97.1 & 88.4 & 6.75\\
		Fashion- & GANomaly~\cite{Akcay2018} 
		& 80.3 & 83.0 & 75.9 & 87.2 & 71.4 & 92.7 & 81.0 & 88.3 & 69.3 & 80.3 & 80.9 & 7.37\\
		MNIST & GeoTrans~\cite{golan2018deep} 
		& \textbf{99.4} & 97.6 & 91.1 & 89.9 & 92.1 & 93.4 & 83.3 & 98.9 & 90.8 & 99.2 & 93.5 & 5.22\\
		& ARNet~\cite{fye2020ARNet} & 92.7 & 99.3 & 89.1 & 93.6 & 90.8 & 93.1 & 85.0 & 98.4 & 97.8 & 98.4 & 93.9 & 4.70\\
		\cmidrule(lr){2-2} \cmidrule(lr){3-3} \cmidrule(lr){4-4} \cmidrule(lr){5-5} \cmidrule(lr){6-6} \cmidrule(lr){7-7} \cmidrule(lr){8-8} \cmidrule(lr){9-9} \cmidrule(lr){10-10} \cmidrule(lr){11-11} \cmidrule(lr){12-12} \cmidrule(lr){13-13} \cmidrule(lr){14-14}
        
        & Our Base model & 94.3 & 99.1 & 91.6 & 95.3 & 90.9 & 98.0 & 86.1 & 98.0 & 97.2 & 97.1 & 94.8 & 4.11\\
		& Our Extension model & 96.7 & \textbf{99.7} & \textbf{95.3} & \textbf{97.3} & \textbf{95.1} & \textbf{99.2} & \textbf{89.8} & \textbf{99.3} & \textbf{99.1} & \textbf{99.3} & \textbf{97.1} & \textbf{3.08}\\
        \cmidrule(lr){1-14}
	& VAE~\cite{kingma2013auto} 
	& 62.0 & 66.4 & 38.2 & 58.6 & 38.6 & 58.6 & 56.5 & 62.2 & 66.3 & 73.7 & 58.1 & 11.50\\
	& D-SVDD~\cite{SVDD}
	& 61.7 & 65.9 & 50.8 & 59.1 & 60.9 & 65.7 & 67.7 & 67.3 & 75.9 & 73.1 & 64.8 & 7.16\\
	& DAGMM~\cite{zhai2016deep} 
	& 41.4 & 57.1 & 53.8 & 51.2 & 52.2 & 49.3 & 64.9 & 55.3 & 51.9 & 54.2 & 53.1 & 5.95\\
	& DSEBM~\cite{zong2018deep} 
	& 56.0 & 48.3 & 61.9 & 50.1 & 73.3 & 60.5 & 68.4 & 53.3 & 73.9 & 63.6 & 60.9 & 9.10\\
	& AnoGAN~\cite{schlegl2017unsupervised} 
	& 61.0 & 56.5 & 64.8 & 52.8 & 67.0 & 59.2 & 62.5 & 57.6 & 72.3 & 58.2 & 61.2 & 5.68\\
	CIFAR- & ADGAN~\cite{deecke2018anomaly} 
	& 63.2 & 52.9 & 58.0 & 60.6 & 60.7 & 65.9 & 61.1 & 63.0 & 74.4 & 64.4 & 62.4 & 5.56\\
	10 & GANomaly~\cite{Akcay2018} 
	& \textbf{93.5} & 60.8 & 59.1 & 58.2 & 72.4 & 62.2 & 88.6 & 56.0 & 76.0 & 68.1 & 69.5 & 13.08\\
	& OCGAN~\cite{OCGAN} 
	& 75.7 & 53.1 & 64.0 & 62.0 & 72.3 & 62.0 & 72.3 & 57.5 & 82.0 & 55.4 & 65.6 & 9.52\\
	& GeoTrans~\cite{golan2018deep} 
	& 74.7 & 95.7 & 78.1 & 72.4 & 87.8 & 87.8 & 83.4 & \textbf{95.5} & 93.3 & 91.3 & 86.0 & 8.52\\
	& ARNet~\cite{fye2020ARNet} & 78.5 & 89.8 & \textbf{86.1} & 77.4 & \textbf{90.5} & 84.5 & 89.2 & 92.9 & 92.0 & 85.5 & 86.6 & 5.35\\
	\cmidrule(lr){2-2} \cmidrule(lr){3-3} \cmidrule(lr){4-4} \cmidrule(lr){5-5} \cmidrule(lr){6-6} \cmidrule(lr){7-7} \cmidrule(lr){8-8} \cmidrule(lr){9-9} \cmidrule(lr){10-10} \cmidrule(lr){11-11} \cmidrule(lr){12-12} \cmidrule(lr){13-13} \cmidrule(lr){14-14}
    
    & Our Base model & 89.9 & 95.7 & 80.6 & 72.8 & 80.9 & 83.8 & 91.1 & 91.1 & 93.0 & 87.9 & 86.7 & 5.47\\
	& Our Extension model & 93.0 & \textbf{97.6} & 88.5 & \textbf{85.8} & 89.5 & \textbf{92.1} & \textbf{95.2} & 93.6 & \textbf{95.0} & \textbf{95.5} & \textbf{92.6} & \textbf{3.65}\\
	\cmidrule(lr){1-14}
	    & GANomaly~\cite{Akcay2018} 
		& 58.9 & 57.5 & 55.7 & 57.9 & 47.9 & 61.2 & 56.8 & 58.2 & 49.7 & 48.8 & 55.3 & \textbf{4.46}\\
		& GeoTrans~\cite{golan2018deep} 
		& 72.9 & 61.0 & 66.8 & 82.0 & 56.7 & 70.1 & 68.5 & 77.2 & 62.8 & 83.6 & 70.1 & 8.43\\
		ImageNet & ARNet~\cite{fye2020ARNet} & 71.9 & 85.8 & 70.7 & 78.8 & 69.5 & 83.3 & 80.6 & 72.4 & 74.9 & \textbf{84.3} & 77.2 & 5.77\\
		\cmidrule(lr){2-2} \cmidrule(lr){3-3} \cmidrule(lr){4-4} \cmidrule(lr){5-5} \cmidrule(lr){6-6} \cmidrule(lr){7-7} \cmidrule(lr){8-8} \cmidrule(lr){9-9} \cmidrule(lr){10-10} \cmidrule(lr){11-11} \cmidrule(lr){12-12} \cmidrule(lr){13-13} \cmidrule(lr){14-14}
        
        & Our Base model & 79.8 & 81.3 & 80.7 & 85.4 & 79.7 & 84.1 & 75.3 & 81.5 & 76.1 & 67.1 & 79.1 & 5.23\\
        
		& Our Extension model & \textbf{88.2} & \textbf{89.8} & \textbf{86.1} & \textbf{89.6} & \textbf{89.4} & \textbf{89.4} & \textbf{80.3} & \textbf{85.7} & \textbf{81.8} & 74.9 & \textbf{85.5} & 5.02\\
		\cmidrule(lr){1-14}
	\end{tabular}
	\end{minipage}
	
	\begin{minipage}[t]{0.92\textwidth}
	\small
	\begin{tabular}{cx{3.0cm}x{0.7cm}x{0.7cm}x{0.7cm}x{0.7cm}x{0.7cm}x{0.7cm}x{0.7cm}x{0.7cm}x{0.7cm}x{0.7cm}x{0.8cm}}
		Dataset & Method & 0 & 1 & 2 & 3 & 4 & 5 & 6 & 7 & 8 & 9 & 10\\
		\cmidrule(lr){1-1} \cmidrule(lr){2-2} \cmidrule(lr){3-3} \cmidrule(lr){4-4} \cmidrule(lr){5-5} \cmidrule(lr){6-6} \cmidrule(lr){7-7} \cmidrule(lr){8-8} \cmidrule(lr){9-9} \cmidrule(lr){10-10} \cmidrule(lr){11-11} \cmidrule(lr){12-12} \cmidrule(lr){13-13}
		& DAGMM~\cite{zhai2016deep} 
		& 43.4 & 49.5 & 66.1 & 52.6 & 56.9 & 52.4 & 55.0 & 52.8 & 53.2 & 42.5 & 52.7\\
		& DSEBM~\cite{zong2018deep} 
		& 64.0 & 47.9 & 53.7 & 48.4 & 59.7 & 46.6 & 51.7 & 54.8 & 66.7 & 71.2 & 78.3 \\
		& ADGAN~\cite{deecke2018anomaly} 
		& 63.1 & 54.9 & 41.3 & 50.0 & 40.6 & 42.8 & 51.1 & 55.4 & 59.2 & 62.7 & 79.8 \\
		& GANomaly~\cite{Akcay2018} 
		& 57.9 & 51.9 & 36.0 & 46.5 & 46.6 & 42.9 & 53.7 & 59.4 & 63.7 & 68.0 & 75.6\\
		& GeoTrans~\cite{golan2018deep} 
		& 74.7 & 68.5 & 74.0 & 81.0 & 78.4 & 59.1 & 81.8 & 65.0 & \textbf{85.5} & 90.6 & 87.6\\
		& ARNet~\cite{fye2020ARNet} & 77.5 & 70.0 & 62.4 & 76.2 & 77.7 & 64.0 & \textbf{86.9} & 65.6 & 82.7 & 90.2 & 85.9 \\
		\cmidrule(lr){2-2} \cmidrule(lr){3-3} \cmidrule(lr){4-4} \cmidrule(lr){5-5} \cmidrule(lr){6-6} \cmidrule(lr){7-7} \cmidrule(lr){8-8} \cmidrule(lr){9-9} \cmidrule(lr){10-10} \cmidrule(lr){11-11} \cmidrule(lr){12-12} \cmidrule(lr){13-13}
		&Our Base model & 71.1 & 80.2 & 78.8 & 79.5 & 78.9 & 79.6 & 78.5 & 75.6 & 74.3 & 84.8 & 85.2\\
		CIFAR-& Our Extension model & \textbf{82.0} & \textbf{85.8} & \textbf{87.8} & \textbf{86.2} & \textbf{89.3} & \textbf{87.6} & 86.8 & \textbf{83.0} & 85.1 & \textbf{91.3} & \textbf{90.1} \\
		\cmidrule(lr){2-13}
		100 & Method & 11 & 12 & 13 & 14 & 15 & 16 & 17 & 18 & 19 & \textbf{avg} & SD\\
		\cmidrule(lr){2-2} \cmidrule(lr){3-3} \cmidrule(lr){4-4} \cmidrule(lr){5-5} \cmidrule(lr){6-6} \cmidrule(lr){7-7} \cmidrule(lr){8-8} \cmidrule(lr){9-9} \cmidrule(lr){10-10} \cmidrule(lr){11-11} \cmidrule(lr){12-12} \cmidrule(lr){13-13} 
		& DAGMM~\cite{zhai2016deep}  
		& 46.4 & 42.7 & 45.4 & 57.2 & 48.8 & 54.4 & 36.4 & 52.4 & 50.3 & 50.5 & 6.55\\
		& DSEBM~\cite{zong2018deep}  
		& 62.7 & 66.8 & 52.6 & 44.0 & 56.8 & 63.1 & 73.0 & 57.7 & 55.5 & 58.8 & 9.36\\
		& ADGAN~\cite{deecke2018anomaly} 
		& 53.7 & 58.9 & 57.4 & 39.4 & 55.6 & 63.3 & 66.7 & 44.3 & 53.0 & 54.7 & 10.08\\
		& GANomaly~\cite{Akcay2018} 
		& 57.6 & 58.7 & 59.9 & 43.9 & 59.9 & 64.4 & 71.8 & 54.9 & 56.8 & 56.5 & 9.94\\
		& GeoTrans~\cite{golan2018deep} 
		& \textbf{83.9} & 83.2 & 58.0 & \textbf{92.1} & 68.3 & 73.5 & 93.8 & 90.7 & 85.0 & 78.7 & 10.76 \\
		& ARNet~\cite{fye2020ARNet} & 83.5 & \textbf{84.6} & 67.6 & 84.2 & 74.1 & 80.3 & 91.0 & 85.3 & 85.4 & 78.8 & 8.82\\
		\cmidrule(lr){2-2} \cmidrule(lr){3-3} \cmidrule(lr){4-4} \cmidrule(lr){5-5} \cmidrule(lr){6-6} \cmidrule(lr){7-7} \cmidrule(lr){8-8} \cmidrule(lr){9-9} \cmidrule(lr){10-10} \cmidrule(lr){11-11} \cmidrule(lr){12-12} \cmidrule(lr){13-13}
        &Our Base model & 73.4 & 74.2 & 72.1 & 72.8 & 73.1 & 69.8 & 84.0 & 80.3 & 81.0 & 77.4 & 4.62 \\
        & Our Extension model & 80.9 & 84.2 & \textbf{80.3} & 85.2 & \textbf{79.7} & \textbf{82.2} & \textbf{93.9} & \textbf{91.5} & \textbf{87.9} & \textbf{86.0} & \textbf{3.98} \\
		\bottomrule
	\end{tabular}
	\end{minipage}
\label{tal:AUC1}
\end{table*}

\subsection{Efficient Normal Scoring Mechanism}\label{sec:EfficientScoring}

As shown in Eq.~\ref{eq:anomaly score} the normal score is firstly designed to be the similarity of $z_i$ and $\widetilde{z}_i$, which are the latent representations of the randomly augmented view $x_i$ and $\widetilde{x}_i$ of the same test data example $x_{ori_i}$. However, the random augmentation yield in unstable result. An alternative solution is to utilized Monte Carlo samples, and the normal score is reformulated as:
\begin{align}
\begin{split}\label{eq:anomaly score MonteCarlo}
{\operatorname{Normal Score}}=\sum^{H}_{h=1}[sim(z_{g_i}^{(h)},\widetilde{z}_{g_i}^{(h)})]
\end{split}
\end{align}
Where the $h$ refers to the $h^{st}$ sampling. Despite being more stable, it is time-consuming. We introduce a compromise as shown in Eq.~\ref{eq:anomaly score final}, which skip the augmentation and directly input the original sample.

With the base model, we conduct experiment on CIFAR10 and CIFAR100 to evaluate the three normal scoring mechanism based on Eq.~\ref{eq:anomaly score}, \ref{eq:anomaly score final} and \ref{eq:anomaly score MonteCarlo}, which is noted as  $NormalScore_{rand}$, $NormalScore_{ori}$ and $NormalScore_{mc}$ accordingly. To be noted that, in Eq.~\ref{eq:anomaly score MonteCarlo}, H is set as 100. We repeat the testing process for 10 times for each normal scoring mechanism and record the average and standard deviation of the AUROC of all classes in CIFAR10. As illustrated in Table~\ref{tal:Efficient normal scoring mechanism}, the normal scoring mechanisms base on Eq.~\ref{eq:anomaly score final} yield a comparable and stable result with only one inference. We take Eq.~\ref{eq:anomaly score final}$/$Eq.~\ref{eq:anomaly score final extention} as our refined normal scoring mechanism for the base$/$extension model through in the following experiments to obtain the best performance.

\subsection{Extensive Experiment on Extension Model}\label{sec:Extensive Experiment}

Table~\ref{tal:AUC1} provides results of our base model and extension model on MNIST, Fashion-MNIST, CIFAR-10, ImageNet and CIFAR-100. %Methods in comparison are as follows: VAE~\cite{kingma2013auto}, \textcolor{blue}{Deep SVDD~\cite{SVDD},} DAGMM~\cite{zhai2016deep}, DSEBM~\cite{zong2018deep}, AnoGAN~\cite{schlegl2017unsupervised}, ADGAN~\cite{deecke2018anomaly}, GANomaly~\cite{Akcay2018}, OCGAN~\cite{OCGAN}, GeoTrans~\cite{golan2018deep} and ARNet~\cite{fye2020ARNet}. 
For grayscale datasets, such as MNIST and Fashion-MNIST, we convert the image into 3 channels through expanding.
The performance of extension model is improved considerably over the base model both in average and standard deviation of the AUROC with Local DIM~\cite{AMDIM}, indicating that a more effective module to optimize mutual information is beneficial.
On all involved datasets, experiment results illustrate that the average AUROC or the standard deviation of our method outperforms all other methods to different extents. Furthermore, our method reveals greater advantage in more difficult datasets, from FashionMNIST which surpasses the SOTA by 3.2\%, to 6\% in CIFAR10, to 7.2\% in CIFAR100 and to 8.3\% in ImageNet Subset. This may be because our method is benefit from a decoder-free representation learning framework, where an advanced feature extracting model like RESNET can be utilized to handle pictures with higher resolution and more complex texture.
More importantly, our method results in the lower standard deviation of AUROCs, which reflect the stability of the model when dealing with different kinds of anomalous data. This is vital especially in anomaly detection, where anomalous data can not be foreseen\cite{fye2020ARNet}.

\section{Conclusion and Future Work}
In this paper, we are the first to put forward an anomaly detection based objective function that can be optimized end-to-end in an unsupervised fashion. Many works can find mathematical support and further discover the potential optimization direction through our method. At last, based on the objective function we present a method that overperforms the state-of-the-arts, which illustrates the correctness of our objective function and rationality of the design of the loss function.
%Future work
Despite we restrict our method in an unsupervised anomaly detection setting, as can been seen in Equation \ref{eq:LB_KL_reformulate}, this work can be extended to semi-supervised anomaly detection, which remains to be our future work. Notably, there are other loss functions in maximizing mutual information and minimizing entropy to explore.
Furthermore, we can investigate the specific functions of each component in our lower bound objective function in Eq.~\ref{eq:LB_KL_reformulate_2_final}. The function of mutual information seems obvious, as widely explored in unsupervised representation learning approaches~\cite{CPC,DIM,AMDIM,SimCLR,MoCo1}, maximizing mutual information can effectively force the model to obtain better representation. Since the entropy forces $\|z_i\|_2$ to be closed to zero, it may be indicated that the entropy regularizer limits the model representation ability. Thus this trade-off seems to force the network to represent normal data with limited representation ability. In another word, this trade-off force the network to generate features only to properly represent normal data, which is consistent with the insight in \cite{OCGAN}. As anomalous data is inaccessible during training, one feasible solution to enlarge the distribution of normal and anomalous data in latent space is to make anomalous data unable to be represented properly. Is this the essence of anomaly detection? We will explore this in our future work.
.
\ifCLASSOPTIONcaptionsoff
  \newpage
\fi

\bibliographystyle{IEEEtran}
\bibliography{reference.bib}

% Generated by IEEEtran.bst, version: 1.14 (2015/08/26)
\begin{thebibliography}{10}
\providecommand{\url}[1]{#1}
\csname url@samestyle\endcsname
\providecommand{\newblock}{\relax}
\providecommand{\bibinfo}[2]{#2}
\providecommand{\BIBentrySTDinterwordspacing}{\spaceskip=0pt\relax}
\providecommand{\BIBentryALTinterwordstretchfactor}{4}
\providecommand{\BIBentryALTinterwordspacing}{\spaceskip=\fontdimen2\font plus
\BIBentryALTinterwordstretchfactor\fontdimen3\font minus
  \fontdimen4\font\relax}
\providecommand{\BIBforeignlanguage}[2]{{%
\expandafter\ifx\csname l@#1\endcsname\relax
\typeout{** WARNING: IEEEtran.bst: No hyphenation pattern has been}%
\typeout{** loaded for the language `#1'. Using the pattern for}%
\typeout{** the default language instead.}%
\else
\language=\csname l@#1\endcsname
\fi
#2}}
\providecommand{\BIBdecl}{\relax}
\BIBdecl

\bibitem{chandola2009anomaly}
V.~Chandola, A.~Banerjee, and V.~Kumar, ``Anomaly detection: A survey,''
  \emph{Acm Computing Surveys}, vol.~41, no.~3, 2009.

\bibitem{OC-SVM}
B.~Sch{\"o}lkopf, J.~C. Platt, J.~Shawe-Taylor, A.~J. Smola, and R.~C.
  Williamson, ``Estimating the support of a high-dimensional distribution,''
  \emph{Neural computation}, vol.~13, no.~7, pp. 1443--1471, 2001.

\bibitem{oza2018one}
P.~Oza and V.~M. Patel, ``One-class convolutional neural network,'' \emph{IEEE
  Signal Processing Letters}, vol.~26, no.~2, pp. 277--281, 2018.

\bibitem{SVDD}
L.~Ruff, N.~G{\"o}rnitz, L.~Deecke, S.~A. Siddiqui, R.~Vandermeulen, A.~Binder,
  E.~M{\"u}ller, and M.~Kloft, ``Deep one-class classification,'' in
  \emph{International Conference on Machine Learning}, 2018.

\bibitem{Sabokrou2018Adversarially}
M.~Sabokrou, M.~Khalooei, M.~Fathy, and E.~Adeli, ``Adversarially learned
  one-class classifier for novelty detection,'' in \emph{The IEEE Conference on
  Computer Vision and Pattern Recognition}, 2018.

\bibitem{deecke2018anomaly}
L.~Deecke, R.~Vandermeulen, L.~Ruff, S.~Mandt, and M.~Kloft, ``Anomaly
  detection with generative adversarial networks,'' \emph{arXiv preprint
  arXiv:1809.04758}, 2018.

\bibitem{Akcay2018}
S.~Akcay, A.~Atapour-Abarghouei, and T.~P. Breckon, ``{GANomaly:
  Semi-Supervised Anomaly Detection via Adversarial Training},'' \emph{Asian
  Conference on Computer Vision}, 2018.

\bibitem{OCGAN}
P.~Perera, R.~Nallapati, and B.~Xiang, ``Ocgan: One-class novelty detection
  using gans with constrained latent representations,'' in \emph{The IEEE
  Conference on Computer Vision and Pattern Recognition}, 2019.

\bibitem{golan2018deep}
I.~Golan and R.~El-Yaniv, ``Deep anomaly detection using geometric
  transformations,'' in \emph{Advances in Neural Information Processing
  Systems}, 2018.

\bibitem{NipsEffective}
L.~X. e.~a. Wang~S, Zeng~Y, ``Effective end-to-end unsupervised outlier
  detection via inlier priority of discriminative network,'' in \emph{Advances
  in Neural Information Processing Systems}, 2019, pp. 5962--5975.

\bibitem{fye2020ARNet}
F.~Ye, C.~Huang, J.~Cao, M.~Li, Y.~Zhang, and C.~Lu, ``Attribute restoration
  framework for anomaly detection,'' \emph{arXiv preprint arXiv:1911.10676v2},
  2020.

\bibitem{lecun1998mnist}
Y.~LeCun, ``The mnist database of handwritten digits,'' \emph{http://yann.
  lecun. com/exdb/mnist/}, 1998.

\bibitem{xiao2017fashion}
H.~Xiao, K.~Rasul, and R.~Vollgraf, ``Fashion-mnist: a novel image dataset for
  benchmarking machine learning algorithms,'' \emph{arXiv preprint
  arXiv:1708.07747}, 2017.

\bibitem{krizhevsky2009learning}
A.~Krizhevsky and G.~Hinton, ``Learning multiple layers of features from tiny
  images,'' Citeseer, Tech. Rep., 2009.

\bibitem{russakovsky2015imagenet}
O.~Russakovsky, J.~Deng, H.~Su, J.~Krause, S.~Satheesh, S.~Ma, Z.~Huang,
  A.~Karpathy, A.~Khosla, M.~Bernstein \emph{et~al.}, ``Imagenet large scale
  visual recognition challenge,'' \emph{International Journal of Computer
  Vision}, 2015.

\bibitem{Chalapathy2019Deep}
C.~S. Chalapathy~R, ``Deep learning for anomaly detection: A survey,''
  \emph{arXiv preprint arXiv:1901.03407, 2019.}, 2019.

\bibitem{Markou2003Novelty_V1}
S.~S. Markou~M, ``Novelty detection: a review-part 1: statistical approaches,''
  \emph{Signal Processing}, 2003.

\bibitem{Markou2003Novelty_V2}
------, ``Novelty detection: a review-part 2: neural network based
  approaches,'' \emph{Signal Processing}, 2003.

\bibitem{Pimentel2014A}
M.~A.~F. Pimentel, D.~A. Clifton, C.~Lei, and L.~Tarassenko, ``A review of
  novelty detection,'' \emph{Signal Processing}, 2014.

\bibitem{Kiran2018An}
B.~R. Kiran, D.~M. Thomas, and R.~Parakkal, ``An overview of deep learning
  based methods for unsupervised and semi-supervised anomaly detection in
  videos,'' \emph{Journal of Imaging}, vol.~4, no.~2, p.~36, 2018.

\bibitem{chu2018sparse}
W.~Chu, H.~Xue, C.~Yao, and D.~Cai, ``Sparse coding guided spatiotemporal
  feature learning for abnormal event detection in large videos,'' \emph{IEEE
  Transactions on Multimedia}, vol.~21, no.~1, pp. 246--255, 2018.

\bibitem{xu2018anomaly}
K.~Xu, X.~Jiang, and T.~Sun, ``Anomaly detection based on stacked sparse coding
  with intraframe classification strategy,'' \emph{IEEE Transactions on
  Multimedia}, vol.~20, no.~5, pp. 1062--1074, 2018.

\bibitem{xu2019video}
K.~Xu, T.~Sun, and X.~Jiang, ``Video anomaly detection and localization based
  on an adaptive intra-frame classification network,'' \emph{IEEE Transactions
  on Multimedia}, 2019.

\bibitem{Eskin2000Anomaly}
E.~Eskin, ``Anomaly detection over noisy data using learned probability
  distributions,'' in \emph{International Conference on Machine Learning},
  2000.

\bibitem{Yamanishi2000On}
K.~Yamanishi, J.~I. Takeuchi, G.~Williams, and P.~Milne, ``On-line unsupervised
  outlier detection using finite mixtures with discounting learning
  algorithms,'' \emph{Data Mining \& Knowledge Discovery}, 2000.

\bibitem{Rahmani2017Coherence}
M.~Rahmani and G.~K. Atia, ``Coherence pursuit: Fast, simple, and robust
  principal component analysis,'' \emph{IEEE Transactions on Signal
  Processing}, vol.~65, no.~23, pp. 6260--6275, 2017.

\bibitem{Xu2012Robust}
H.~Xu, C.~Caramanis, and S.~Sanghavi, ``Robust pca via outlier pursuit,''
  \emph{IEEE Transactions on Information Theory}, vol.~58, no.~5, pp.
  3047--3064, 2012.

\bibitem{SimilarityMetricAutoencoding}
A.~B.~L. Larsen, S.~K. S{\o}nderby, H.~Larochelle, and O.~Winther,
  ``Autoencoding beyond pixels using a learned similarity metric,''
  \emph{PMLR}, pp. 1558--1566, 2016.

\bibitem{dosovitskiy2016generating}
A.~Dosovitskiy and T.~Brox, ``Generating images with perceptual similarity
  metrics based on deep networks,'' in \emph{Advances in Neural Information
  Processing Systems}, 2016, pp. 658--666.

\bibitem{hendrycks2018deep}
D.~Hendrycks, M.~Mazeika, and T.~Dietterich, ``Deep anomaly detection with
  outlier exposure,'' in \emph{International Conference on Learning
  Representations}, 2019.

\bibitem{SAD}
L.~Ruff, R.~A. Vandermeulen, N.~G{\"o}rnitz, A.~Binder, E.~M{\"u}ller, K.-R.
  M{\"u}ller, and M.~Kloft, ``Deep semi-supervised anomaly detection,''
  \emph{International Conference on Learning Representations}, 2020.

\bibitem{DIM}
R.~D. Hjelm, A.~Fedorov, S.~Lavoie-Marchildon, K.~Grewal, P.~Bachman,
  A.~Trischler, and Y.~Bengio, ``Learning deep representations by mutual
  information estimation and maximization,'' in \emph{International Conference
  on Learning Representations}, 2019.

\bibitem{MINE}
M.~I. Belghazi, A.~Baratin, S.~Rajeshwar, S.~Ozair, Y.~Bengio, A.~Courville,
  and D.~Hjelm, ``Mutual information neural estimation,'' in
  \emph{International Conference on Machine Learning}, 2018, pp. 531--540.

\bibitem{CPC}
A.~v.~d. Oord, Y.~Li, and O.~Vinyals, ``Representation learning with
  contrastive predictive coding,'' \emph{arXiv preprint arXiv:1807.03748},
  2018.

\bibitem{SimCLR}
T.~Chen, S.~Kornblith, M.~Norouzi, and G.~Hinton, ``A simple framework for
  contrastive learning of visual representations,'' \emph{arXiv preprint
  arXiv:2002.05709}, 2020.

\bibitem{AMDIM}
P.~Bachman, R.~D. Hjelm, and W.~Buchwalter, ``Learning representations by
  maximizing mutual information across views,'' in \emph{Advances in Neural
  Information Processing Systems}, 2019, pp. 15\,535--15\,545.

\bibitem{MoCo1}
K.~He, H.~Fan, Y.~Wu, S.~Xie, and R.~Girshick, ``Momentum contrast for
  unsupervised visual representation learning,'' in \emph{Proceedings of the
  IEEE/CVF Conference on Computer Vision and Pattern Recognition}, 2020, pp.
  9729--9738.

\bibitem{cover2006elements}
T.~M. Cover and J.~A. Thomas, ``Elements of information theory 2nd edition
  ({W}iley series in {T}elecommunications and {S}ignal {P}rocessing),'' 2006.

\bibitem{dumoulin2016adversarially}
V.~Dumoulin, I.~Belghazi, B.~Poole, O.~Mastropietro, A.~Lamb, M.~Arjovsky, and
  A.~Courville, ``Adversarially learned inference,'' in \emph{International
  Conference on Learning Representations}, 2017.

\bibitem{gutmann2010noise}
M.~Gutmann and A.~Hyv{\"a}rinen, ``Noise-contrastive estimation: A new
  estimation principle for unnormalized statistical models,'' in
  \emph{Proceedings of the Thirteenth International Conference on Artificial
  Intelligence and Statistics}, 2010, pp. 297--304.

\bibitem{poole2019variational}
B.~Poole, S.~Ozair, A.~van~den Oord, A.~Alemi, and G.~Tucker, ``On variational
  bounds of mutual information,'' in \emph{International Conference on Machine
  Learning}, 2019.

\bibitem{vincent2008extracting}
P.~Vincent, H.~Larochelle, Y.~Bengio, and P.-A. Manzagol, ``Extracting and
  composing robust features with denoising autoencoders,'' in \emph{Proceedings
  of the 25th international conference on Machine learning}, 2008, pp.
  1096--1103.

\bibitem{tishby2000information}
N.~Tishby, F.~C. Pereira, and W.~Bialek, ``The information bottleneck method,''
  \emph{arXiv preprint physics/0004057}, 2000.

\bibitem{tishby2015deep}
N.~Tishby and N.~Zaslavsky, ``Deep learning and the information bottleneck
  principle,'' in \emph{2015 IEEE Information Theory Workshop (ITW)}.\hskip 1em
  plus 0.5em minus 0.4em\relax IEEE, 2015, pp. 1--5.

\bibitem{VIB}
A.~A. Alemi, I.~Fischer, J.~V. Dillon, and K.~Murphy, ``Deep variational
  information bottleneck,'' in \emph{International Conference on Learning
  Representations}, 2017.

\bibitem{ICP}
G.~Pereyra, G.~Tucker, J.~Chorowski, {\L}.~Kaiser, and G.~Hinton,
  ``Regularizing neural networks by penalizing confident output
  distributions,'' in \emph{Workshop Track in International Conference on
  Learning Representations}, 2017.

\bibitem{VDB}
X.~B. Peng, A.~Kanazawa, S.~Toyer, P.~Abbeel, and S.~Levine, ``Variational
  discriminator bottleneck: Improving imitation learning, inverse rl, and gans
  by constraining information flow,'' in \emph{International Conference on
  Learning Representations}, 2018.

\bibitem{blei2003latent}
D.~M. Blei, A.~Y. Ng, and M.~I. Jordan, ``Latent dirichlet allocation,''
  \emph{Journal of machine Learning research}, pp. 933--1022, 2003.

\bibitem{kingma2013auto}
D.~P. Kingma and M.~Welling, ``Auto-encoding variational bayes,'' \emph{arXiv
  preprint arXiv:1312.6114}, 2013.

\bibitem{zhai2016deep}
S.~Zhai, Y.~Cheng, W.~Lu, and Z.~Zhang, ``Deep structured energy based models
  for anomaly detection,'' in \emph{International Conference on Machine
  Learning}, 2016.

\bibitem{zong2018deep}
B.~Zong, Q.~Song, M.~R. Min, W.~Cheng, C.~Lumezanu, D.~Cho, and H.~Chen, ``Deep
  autoencoding gaussian mixture model for unsupervised anomaly detection,'' in
  \emph{International Conference on Learning Representations}, 2018.

\bibitem{schlegl2017unsupervised}
T.~Schlegl, P.~Seeb{\"o}ck, S.~M. Waldstein, U.~Schmidt-Erfurth, and G.~Langs,
  ``Unsupervised anomaly detection with generative adversarial networks to
  guide marker discovery,'' in \emph{International Conference on Information
  Processing in Medical Imaging}, 2017.

\bibitem{donsker1975large}
M.~Donsker and S.~Varadhan, ``Large deviations for markov processes and the
  asymptotic evaluation of certain markov process expectations for large
  times,'' in \emph{Probabilistic Methods in Differential Equations}.\hskip 1em
  plus 0.5em minus 0.4em\relax Springer, 1975, pp. 82--88.

\end{thebibliography}

\clearpage

\appendices
% \section{Proof of the First Zonklar Equation}
% Appendix one text goes here.------------------------------------------------------------------------------------------------------------------------------------------

\section{Model Structure}
The detailed structure of the model we used can be found in this section, where the model structure in Table~\ref{tal:big structure} is utilized for dataset ImageNet and model structure in Table~\ref{tal:small structure} is utilized for other datasets.

\begin{table}[!htb]
 \centering
 \caption{Small Encoder Architecture}\label{tal:small structure}
 \small
\begin{tabular}{l}
\toprule \textbf{Small Encoder Architecture} \\
\hline $\operatorname{ReLU}(\text { Conv } 2 \mathrm{d}(3, \mathrm{ndf}, 3,1,0))$ \\
ResBlock $\left(\mathrm{ndf}, \mathrm{ndf}, 1,1,0\right)$\\
ResBlock $\left(1^{*} \mathrm{ndf}, 2^{*} \mathrm{ndf}, 4,2, \text { ndepth }\right)$ \\
ResBlock $\left(2^{*} \mathrm{ndf}, 4^{*} \mathrm{ndf}, 2,2, \text { ndepth }\right)$ \\
ResBlock $\left(4^{*} \mathrm{ndf}, 4^{*} \mathrm{ndf}, 3,1, \text { ndepth }\right)-$ provides $f_{local}$ \\
ResBlock $\left(4^{*} \mathrm{ndf}, 4^{*} \mathrm{ndf}, 3,1, \text { ndepth }\right)$ \\
ResBlock $\left(4^{*} \mathrm{ndf}, \text { nrkhs }, 3,1,1\right)$ \ \ \ \ \ \ \ $-$ provides $f_{global}$ \\
\hline $ndf=128,~nrkhs=1024,~ndepth=10$\\
\bottomrule
\end{tabular}
\end{table}

\begin{table}[!htb]
 \centering
 \caption{Big Encoder Architecture}\label{tal:big structure}
 \small
\begin{tabular}{l}
\toprule \textbf{Big Encoder Architecture} \\
\hline $\operatorname{ReLU}(\text { Conv } 2 \mathrm{d}(3, \mathrm{ndf}, 5,2,2))$ \\
$\operatorname{ReLU}(\text { Conv } 2 \mathrm{d}(\mathrm{ndf}, \mathrm{ndf}, 3,1,0))$ \\
ResBlock $\left(1^{*} \mathrm{ndf}, 2^{*} \mathrm{ndf}, 4,2, \text { ndepth }\right)$ \\
ResBlock $\left(2^{*} \mathrm{ndf}, 4^{*} \mathrm{ndf}, 4,2, \text { ndepth }\right)$ \\
ResBlock $\left(4^{*} \mathrm{ndf}, 8^{*} \mathrm{ndf}, 2,2, \text { ndepth }\right)$ \\
ResBlock $\left(8^{*} \mathrm{ndf}, 8^{*} \mathrm{ndf}, 3,1, \text { ndepth }\right)-$ provides $f_{local}$ \\
ResBlock $\left(8^{*} \mathrm{ndf}, 8^{*} \mathrm{ndf}, 3,1, \text { ndepth }\right)$ \\
ResBlock $\left(8^{*} \mathrm{ndf}, \text { nrkhs }, 3,1,1\right)$ \ \ \ \ \ \ \ $-$ provides $f_{global}$ \\
\hline $ndf=192,~nrkhs=1536,~ndepth=8$\\
\bottomrule
\end{tabular}
\end{table}

\section{Pseudocode for extension method}
\label{section:Pseudocode for extension method}

The testing pseudocode of base model can be found in Algorithm~\ref{alg::Testing Pseudocode basic model}. The training and testing pseudocode of extension model can be found in Algorithm~\ref{alg::Training Pseudocode extension model},~\ref{alg::Testing Pseudocode extension model}. The NCE Loss pseudocode of extension model can be found in Algorithm~\ref{alg::NCE Loss Pseudocode extension model}.

\begin{algorithm}
  \caption{Testing Pseudocode for base model}
  \label{alg::Testing Pseudocode basic model}  
  \footnotesize
  \KwIn{batch size N, similarity function $sim$, structure of model $E_\theta$}
%   \KwOut{$con(r_i)$}
    \For{sampled minibatch $\left\{x_k\right\}_{k=1}^N$}
    {
        \For{\textbf{all} $k \in \left\{ 1,....,N\right\}$}
        {
            \# Extract features\\
            $z_{ki} = E_\theta(x_i)$\\
        }
        \For{ \textbf{all} $i \in \left\{ 1,....,N\right\}$ }
        {   
            \# Similarity\\
            $s_i = sim(z_i,z_i)=z^\top_i \cdot z_i$\\
            $s'_{i} = c_2\cdot tanh\left(\frac{s_{i}}{c_1\cdot c_2} \right)$ \\
            \#normal data has larger $s'_i$ \\
        }
    }
    return $roc\_auc\_score(s'_i,\ label_i)$
\end{algorithm}

\begin{algorithm}
  \caption{NCE Loss Pseudocode extension model}
  \label{alg::NCE Loss Pseudocode extension model} 
  \footnotesize
  \KwIn{batch size N, $input\ feature\;\Phi_1,\Phi_2$}
%   \KwOut{$con(r_i)$}

  \# $shape \ \Phi_1 (n\_batch, n\_dim, n_1)$\\
  \# $shape \ \Phi_2 (n\_batch, n\_dim, n_2)$\\
  \For{sampled minibatch $\left\{x_k\right\}_{k=1}^N$}
  {
        \For{ \textbf{all} $i \in \left\{ 1,....,2N\right\} and\; j \in \left\{ 1,....,2N\right\} $}
        {   
            \# $shape \ \Phi_1 (n\_dim, n_1)$\\
            \# $shape \ \Phi_2 (n\_dim, n_2)$\\
            \# Similarity\\
            $s_{i,j} = sim(\phi_{1_i},\phi_{2_j}) = \sum_{n_1}\sum_{n_2} \phi_{1_i}^\top \cdot \phi_{2_j}$\\
            $s'_{i,j} = c_2\cdot tanh\left(\frac{s_{i,j}}{c_1\cdot c_2} \right)$ \\
        }
        
        \textbf{define} $S = \left\{ s'_{i,j} ,i,j \in {2N} \right\},\; s_{max} = \max\left\{S\right\}$\\
        \textbf{define} $S_{shift} = \left\{ \hat{s}_{i,j} = s'_{i,j}-s_{max} ,i,j \in {2N} \right\}$
        
        \textbf{define} $\ell(i,j) =-\log \frac{exp\left(\hat{s}_{i,j}\right)}{ \sum_{k=1}^{2N}\mathbb{I}_{[k\not=i]} exp(\hat{s}_{i,k})} $\\
        $\mathcal{L}_{nce}=\frac{1}{2N}\sum_{k=1}^N\left[ \ell(2k-1,2k)+\ell(2k,2k-1)\right]$\\
    }
    $return \mathcal{L}_{nce}$ 
\end{algorithm}

\begin{algorithm}
  \caption{Training Pseudocode extension model}
  \label{alg::Training Pseudocode extension model} 
  \footnotesize
  \KwIn{batch size N, NCE Loss function $\mathcal{L}_{nce}$, encoder model $E_{\theta_1}$, nonlinear function $ \phi_{\theta_2}$ entropy weight $\beta$}
%   \KwOut{$con(r_i)$}
  \For{sampled minibatch $\left\{x_k\right\}_{k=1}^N$}
  {
        \For{\textbf{all}$ k \in \left\{ 1,....,N\right\}$}
        {
            randomly draw two augmentation functions $t, t'\sim \mathcal{T}$\\
            \# Augmentation\\
            $\widetilde{x}_{2k-1} = t(x_k)$\\
            $\widetilde{x}_{2k} = t'(x_k)$\\
            \# Extract features\\
            $g_{2k-1},l_{2k-1} = E_{\theta_1}(\widetilde{x}_{2k-1})$\\
            $g_{2k},l_{2k} = E_{\theta_1}(\widetilde{x}_{2k})$\\
            \# nonlinear projection\\
            $\widetilde{l}_{2k-1} = \phi_{\theta_2}(l_{2k-1})$\\
            $\widetilde{l}_{2k} = \phi_{\theta_2}(l_{2k})$\\
            
        }
        
        $\mathcal{L}_{gvg}=\mathcal{L}_{nce}(g_{2k-1},g_{2k})$\\
        $\mathcal{L}_{gvl}=0.5 * [\mathcal{L}_{nce}(g_{2k},l_{2k-1})] + \mathcal{L}_{nce}(g_{2k-1},l_{2k})]$\\
        $\mathcal{L}_{entropy}=\frac{1}{2N}\sum_{k=1}^{2N} (\|g_i\|_q + \|l_i\|_q)$\\
        $\mathcal{L}=\mathcal{L}_{gvg} + \mathcal{L}_{gvl} + \beta \cdot \mathcal{L}_{entropy}$\\
        update networks $E_{\theta}$
    }
    return encoder network $E_\theta$ 
\end{algorithm}

\begin{algorithm}
  \caption{Testing Pseudocode extension model}
  \label{alg::Testing Pseudocode extension model} 
  \footnotesize
  \KwIn{batch size N, similarity function $sim$, encoder model $E_{\theta_1}$, nonlinear function $ \phi_{\theta_2}$}
%   \KwOut{$con(r_i)$}
    \For{sampled minibatch $\left\{x_k\right\}_{k=1}^N$}
    {
        \For{\textbf{all} $k \in \left\{ 1,....,N\right\}$}
        {
            \# Extract features\\
            $g_{k},l_{k} = E_{\theta_1}(x_k)$\\
            \# nonlinear projection\\
            $\widetilde{l}_{k} = \phi_{\theta_2}(l_{k})$\\
            
        }
        \For{ \textbf{all} $k \in \left\{ 1,....,N\right\}$ }
        {   
            \# Similarity\\
            $s_{gvg_k} = sim(g_k,g_k)$\\
            $s_{gvl_k} = sim(g_k,l_k)$\\
            $s_k = s_{gvg_k} + s_{gvl_k}$\\
            $s'_{k} = c_2\cdot tanh\left(\frac{s_k}{c_1\cdot c_2} \right)$ \\
            \#normal data has larger $s'_i$ \\
        }
    }
    return $roc\_auc\_score(s'_i,\ label_i)$
\end{algorithm}

\end{document}